\pdfoutput=1
\documentclass[conference]{IEEEtran}
\IEEEoverridecommandlockouts
\usepackage{cite}
\usepackage{amsmath,amssymb,amsfonts}
\usepackage{graphicx}
\usepackage{textcomp}
\graphicspath{{./figures/}}
\usepackage{subcaption}
\usepackage{multirow}
\usepackage{multicol}
\usepackage{array}
\usepackage{float}
\usepackage{amsmath,amsfonts}
\usepackage{amsthm}
\usepackage{mathrsfs}
\usepackage{bm}
\usepackage{bbm}
\usepackage{dsfont}
\usepackage{algorithm}
\usepackage{algorithmic}

\newtheorem{lemma}{Lemma}
\newtheorem{definition}{Definition}

\newtheorem{proposition}{Proposition}

\usepackage{xcolor}
\usepackage{booktabs}

\def\BibTeX{{\rm B\kern-.05em{\sc i\kern-.025em b}\kern-.08em
    T\kern-.1667em\lower.7ex\hbox{E}\kern-.125emX}}
    
\makeatletter
\newcommand{\linebreakand}{%
  \end{@IEEEauthorhalign}
  \hfill\mbox{}\par
  \mbox{}\hfill\begin{@IEEEauthorhalign}
}
\makeatother

\begin{document}
\pagestyle{plain}
\title{Towards Higher-order Topological Consistency for Unsupervised Network Alignment}


\author{\IEEEauthorblockN{1\textsuperscript{st} Qingqiang Sun}
\IEEEauthorblockA{
\textit{University of New South Wales}\\
 Sydney, Australia \\
qingqiang.sun@unsw.edu.au}
\and
\IEEEauthorblockN{2\textsuperscript{nd} Xuemin Lin}
\IEEEauthorblockA{
\textit{Shanghai Jiao Tong Universiy}\\
Shanghai, China \\
xuemin.lin@gmail.com}
\and
\IEEEauthorblockN{3\textsuperscript{rd} Ying Zhang}
\IEEEauthorblockA{
\textit{University of Technology Sydney}\\
Sydney, Australia \\
ying.zhang@uts.edu.au}
\linebreakand 
\IEEEauthorblockN{4\textsuperscript{th} Wenjie Zhang\thanks{\IEEEauthorrefmark{1}Corresponding author.}\IEEEauthorrefmark{1}}
\IEEEauthorblockA{
\textit{University of New South Wales}\\
Sydney, Australia \\
zhangw@cse.unsw.edu.au}
\and
\IEEEauthorblockN{5\textsuperscript{th} Chaoqi Chen}
\IEEEauthorblockA{
\textit{University of Hong Kong}\\
Hong Kong, China \\
chencq@connect.hku.hk}
}

\maketitle

\begin{abstract}
Network alignment task, which aims to identify corresponding nodes in different networks, is of great significance for many subsequent applications. Without the need for labeled anchor links, unsupervised alignment methods have been attracting more and more attention. However, the topological consistency assumptions defined by existing methods are generally low-order and less accurate because only the edge-indiscriminative topological pattern is considered, which is especially risky in an unsupervised setting. To reposition the focus of the alignment process from low-order to higher-order topological consistency, in this paper, we propose a fully unsupervised network alignment framework named HTC. The proposed higher-order topological consistency is formulated based on edge orbits, which is merged into the information aggregation process of a graph convolutional network so that the alignment consistencies are transformed into the similarity of node embeddings. Furthermore, the encoder is trained to be multi-orbit-aware and then be refined to identify more trusted anchor links. Node correspondence is comprehensively evaluated by integrating all different orders of consistency. {In addition to sound theoretical analysis, the superiority of the proposed method is also empirically demonstrated through extensive experimental evaluation. On three pairs of real-world datasets and two pairs of synthetic datasets, our HTC consistently outperforms a wide variety of unsupervised and supervised methods with the least or comparable time consumption. It also exhibits robustness to structural noise as a result of our multi-orbit-aware training mechanism.}
\end{abstract}

\begin{IEEEkeywords}
unsupervised network alignment, edge orbit, graph convolutional network, high-order topological consistency
\end{IEEEkeywords}

\section{Introduction}
Network alignment task, which aims to identify entity correspondence across different networks, is usually the very first step of many downstream analyzing tasks. For instance, recognizing the same user on different social networks can facilitate friend suggestion, item recommendation, personalized advertisement  \cite{dong2012link,xiang2018online, li2014matching, liu2016aligning, yin2016adapting}. Similar scenarios also exist widely in other fields, such as protein network analysis \cite{nassar2018low}, knowledge discovery \cite{zhan2015influence}, etc. 

Identifying corresponding nodes across different networks is an extremely hard task, even for humans. Manually labelling correspondence can be prohibitively challenging, expensive (in human efforts, time, and money costs), and tedious \cite{Zhang2019}. Due to such obstacles, in some cases, it may be impractical to get access to sufficient labels for training well-performed supervised or even semi-supervised models \cite{mu2016user, liu2016aligning}. By contrast, unsupervised models can be trained without the need for labeled data, which is more flexible and practical in real-world application scenarios. Thus, unsupervised alignment methods have been drawing a surge of interest recently \cite{trung2020adaptive, zhu2020huna, zhou2020unsupervised}.

In general, most existing alignment methods construct their models based on the assumption of attribute consistency or structural consistency, no matter supervised or unsupervised \cite{trung2020comparative}. For instance, IsoRank~\cite{singh2008global} is a typical topology-only method that recognizes two nodes from two networks to be similar if their neighborhoods are similar. The most direct application of attribute consistency is to heuristically align users with the same user profiles such as name, gender, birthday, and location \cite{labitzke2011your, liu2013s}. Instead of relying only on either attribute consistency or structural consistency, an increasing body of recent work makes efforts to take both attributes and structural information into consideration (if both are available) and receives significant performance gains by embracing the complementary effect between them \cite{zhang2016final, heimann2018regal, trung2020adaptive}.

Despite various types of topological information being utilized, such as node degree~\cite{heimann2018regal}, connectivity~\cite{man2016predict}, and co-occurrence in random walks~\cite{zhou2018deeplink}, most existing works extract such information by relying solely on the most trivial topological pattern and accordingly define their low-order topological consistency. Specifically, in the \emph{trivial} perspective of network topology, the connections between nodes are indiscriminative; that is, each edge plays the same role in the network. Nevertheless, edges can be further distinguished in a \emph{higher-order} view of network topology based on their specific functions in local network structures, such as motifs \cite{milo2002network} or graphlets \cite{prvzulj2004modeling} (it is worth noting that the so-called higher-order topology is not simply equivalent to a larger neighborhood). The main differences between trivial and high-order topology lie in two ways:

(a) \textit{Topological discrimination.} Compared with trivial topology, high-order topology is more discriminative. For instance, as displayed in Fig.~\ref{fig: orbit network}, all edges have the same weights/widths in the plain type of network topology, but they show a significant difference when higher-order edge patterns are considered (details about edge orbit will be illustrated in section \ref{sec: methodology}). As a result, the local topology around a node can be distinguished to a larger extent.  

(b) \textit{Accuracy in topological consistency.} As the trivial topology is of less discrimination, the corresponding topological consistency defined by that may not be accurate enough and hence degrades the alignment performance. 
For example, say we want to identify the anchor node of node \textit{A} in Fig.~\ref{fig: orbit network} based on their {one-hop neighborhood}. Any node with the same {degree} as node \textit{A} can be regarded as satisfying low-order topological consistency based on the original topological pattern, which may be easily satisfied by a lot of candidates and lead to an ambiguous alignment, while in the higher-order topology, a potential anchor node should not only have an identical {degree} but also have edges with similar {weights} connecting its neighbors, which results in a more accurate consistency and helps to filter out deceptive candidates. 

Focusing solely on the trivial topology and corresponding consistency is generally more risky in the unsupervised setting. The fact is, models lack the capacity to extract extra information favourable to alignment accuracy from the trivial topology without the guidance of label information, and hence heavily depend on the consistency assumption. In other words, developing low-order topological consistency to high order is more crucial in this scenario.

\begin{figure}[t]
    \setlength{\abovecaptionskip}{-0.0cm}   
    \setlength{\belowcaptionskip}{-0.3cm}  
    \centering
    \includegraphics[width=0.47\textwidth]{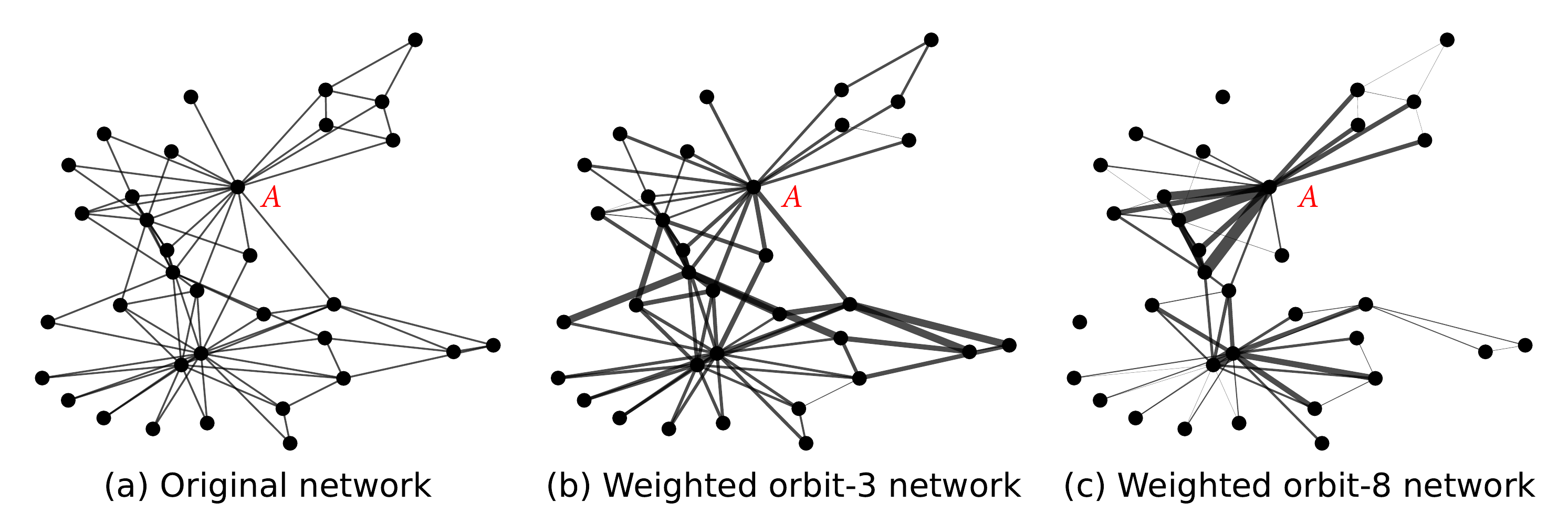}
    \caption{Network topology can differ significantly when we define adjacency based on different orbit patterns. Take node \textit{A} as an example: all of its edges are identical in (a); its edges in (b) and (c) have different weights/widths according to the frequency that they occur on orbit 3 and 8 respectively (the edge removed is equivalent to its corresponding weight being 0).} 
    \label{fig: orbit network}
\end{figure}

To alleviate the aforementioned issues, in this paper, we propose a fully unsupervised framework called \textbf{HTC} (towards \underline{H}igher-order \underline{T}opological \underline{C}onsistency for unsupervised network alignment). Our assumption is that two nodes are more likely to have an anchor link if they hold not only low-order but also higher-order topological consistency. Considering that, we define higher-order topological consistency based on edge orbits. By conducting an orbit-weighted aggregation process, we transform the higher-order topological consistency and attribute consistency into the similarity of corresponding node embeddings, which is theoretically analyzed and proved. Without labeled data, we learn from the training paradigm of Graph Auto-Encoder (GAE)~\cite{kipf2016variational} but share the encoding parameters between different orders of topology so that the encoder can be multi-orbit-aware. In addition, we propose an effective fine-tuning mechanism to find more trustworthy anchor links by assigning trusted node pairs a larger aggregation coefficient, and its validity is verified in theory. A comprehensive evaluation of node correspondence is obtained by further integrating alignment scores computed based on different orders of consistency, rather than focusing on any single topological pattern. 

{The proposed HTC framework is not only theoretically sound but also empirically performs well. HTC consistently outperforms a wide variety of state-of-the-art models, even though some of those competitors are supervised, across three pairs of real-world datasets with the least or comparable time cost. In particular, significant improvements (up to 49\% in terms of $precision@1$) over the basic Graph Convolutional Network (GCN) based alignment paradigm are observed continuously. Experimental results on two pairs of synthetic datasets also verify the robustness of HTC against structural noise. Other experimental results, including ablation tests, hyperparameter sutdy, and visulization analysis, further verify the significance of the proposed higher-order topological consistency for improving alignment precision.}

Our contributions can be summarized as follows:   
\begin{itemize}
    \item We propose a fully unsupervised network alignment framework, which formulates higher-order topological consistency based on edge orbits.
    \item It is theoretically proved that the formulated higher-order topological consistency combined with attribute consistency can derive node embedding similarity, which transforms the network alignment task into the task of node embedding similarity measurement.
    {\item As an additional effect of the proposed multi-orbit-aware training mechanism, our method can be robust to structural noise.}
   {\item We introduce the concept of trusted pairs and accordingly refine embeddings so as to find more trusted pairs, which can alleviate the hubness problem that accompanies the roughly-learned embeddings.}
    {\item The superiority of our method, such as effectiveness, efficiency, and robustness, is comprehensively evaluated through extensive experiments.}
\end{itemize}

The rest of this paper is organized as follows. We briefly review related work in Section \ref{sec: related work}. The problem is formulated in Section \ref{sec: formulation} before proposing our framework in Section \ref{sec: methodology}. Section \ref{sec: experiment} reports experimental results, followed by conclusions made in Section \ref{sec: conclusion}.

\section{Related Work} \label{sec: related work}
\subsection{Network Alignment}
So far, most network alignment approaches are supervised. Supervisory data are required so as to refine embedding space either in the form of partial ground truth (e.g. PALE \cite{man2016predict}, CENALP \cite{du2019joint}) or prior alignment matrix (e.g. IsoRank \cite{singh2008global}, FINAL \cite{zhang2016final}). Due to the expensive cost of manual labeling, unsupervised methods are more suitable and desirable in some real-life applications. 

Another noticeable trend is that more and more methods try to leverage both structural and attribute information so as to better capture the information of the network nodes (such as FINAL\cite{zhang2016final} and REGAL~\cite{heimann2018regal}). Recently, some work has taken advantage of the natural strength of GCN in integrating both network topology and node attributes and has empirically shown the great potential of GCN for alignment tasks \cite{trung2020adaptive, liang2021unsupervised}. A wide variety of structural metrics are proposed and utilized for formulating corresponding topological consistency, like node degree~\cite{heimann2018regal}, connectivity~\cite{man2016predict}, and co-occurrence in random walks~\cite{zhou2018deeplink}. Besides, some methods pay attention to a broader neighborhood of nodes, such as stacking more GCN layers~\cite{trung2020adaptive} or aligning triangular structures~\cite{mohammadi2016triangular}. Even so, most previous work essentially just models the edge-indiscriminative topological pattern, and hence their corresponding topological consistencies remain low-order. Unlike previous work, we formulate the high-order topological consistency based on different orders of edge pattern. It should be emphasized that larger neighborhoods and higher-order consistency cannot be simply equalized. The experimental results also empirically demonstrate that our proposed higher-order topological consistency performs far better than simply considering a larger neighborhood, such as introducing diffusion matrices\cite{klicpera2019diffusion}.

\subsection{High-order Pattern}
High-order patterns of network structure such as graphlets and motifs have been demonstrated to be very informative and helpful for different tasks in many fields. For instance, graph motifs are utilized in \cite{monti2018motifnet} for classifying directed CORA citation networks. Graphlet patterns are taken into consideration to obtain better performance in the semi-supervised node classification task \cite{lee2019graph}. In \cite{sankar2019meta}, the use of motifs enables the perception of high-order semantics in heterogeneous graphs. In more detail, each node or edge can be differentiated at the level of orbit \cite{prvzulj2007biological, solava2012graphlet}. 
{Some network alignment techniques, such as H-GRAAL~\cite{milenkovic2010optimal}, GREAT~\cite{crawford2015great}, GraphletAlign~\cite{almulhim2019network}, etc., treat the graph degree vectors as node/edge features and then quantify the corresponding topological similarity for network alignment. By contrast, we define high-order topological consistency based on the constructed graph orbit matrix, which is injected into the aggregation process of GCN in order to further extract and integrate useful information. In this way, we are capable of obtaining abstract features of nodes for similarity computation.}



\subsection{Graph Convolutional Network}
Compared with other node embedding techniques like Node2Vec \cite{grover2016node2vec}, DeepWalk~\cite{perozzi2014deepwalk}, and LINE~\cite{tang2015line}, GCN \cite{kipf2016semi} is naturally capable of integrating both structure and attribute information due to its special encoding mechanism and has shown its effectiveness in processing graph-structured data \cite{nguyen2018graph}. To address the issue that GCN treats every edge equally without distinction, Graph Attention Network and its variants are proposed, which generally need labels to train extra parameters \cite{velivckovic2017graph, lee2019graph}. However, by using higher-order topological patterns, we can distinguish edges without introducing additional training-required parameters, which is a great merit for unsupervised learning.

Auto-Encoder and its variants have important applications in many fields. For example, Wang et al.~\cite{wang2021deep} use variational autoencoder with dynamic extensions to successfully address the dynamic interaction problem of latent variables. As a special variant, Graph Autoencoder (GAE) \cite{kipf2016variational} is proposed for unsupervised learning on graph-structured data. Inspired by such an encoder-decoder paradigm, we train our GCN encoder to be multi-orbit-aware by sharing encoding parameters among different orders of orbit topology. Hence, it can be expected to prevent the model from relying on any single topological pattern.

\section{Problem Formulation} \label{sec: formulation}
Our work targets the problem of aligning two \textit{attributed networks}. Since the alignment tasks are usually conducted on networks in similar domains, it is easy to find common (not necessarily all) attributes in their attribute spaces. Generally, an attributed network can be represented as $G=(\mathcal{V}, \mathbf{A}, \mathbf{X})$, where $\mathcal{V}$ is a set consisting of $n$ nodes, $\mathbf{A}\in(0,1)^{n\times n}$ is the adjacency matrix of the network which contains the connection details among those nodes, and $\mathbf{X}\in \mathbb{R}^{n\times d}$ is the attribute/feature matrix in which every node has a $d$-dimensional feature. 

Typically, one of two to-be-alined networks/graphs is called \textit{source network/graph} and the other is called \textit{target network/graph}, denoted as $G_s=(\mathcal{V}_s, \mathbf{A}_s, \mathbf{X}_s)$ and $G_t = (\mathcal{V}_t, \mathbf{A}_t, \mathbf{X}_t)$ respectively. The correspondence is referred to as \textit{anchor links}, and the nodes forming the links are referred to as \textit{anchor nodes}. Mathematically, the unsupervised network alignment task can be defined as follows:
\begin{definition}  
\textbf{Unsupervised Network Alignment.}

 Given a source network $G_s=(\mathcal{V}_s, \mathbf{A}_s, \mathbf{X}_s)$ and a target network $G_t = (\mathcal{V}_t, \mathbf{A}_t, \mathbf{X}_t)$,  the objective of unsupervised network alignment is to identify all possible anchor links across $G_s$ and $G_t$ by computing the alignment matrix $\mathbf{M}\in \mathbb{R}^{n_s \times n_t}$ without using any observed anchor links, where every entry $\mathbf{M}(i,j)$ represents the alignment score between node $i \in \mathcal{V}_s$ and node $j \in \mathcal{V}_t$.
\end{definition}

Generally, most existing work follows the consistency restraints shown in Fig. \ref{fig: alignment consistency}.
\begin{figure}[th]
    \setlength{\abovecaptionskip}{-0.0cm}   
    \setlength{\belowcaptionskip}{-0.3cm}  
    \centering
    \includegraphics[width=0.45\textwidth]{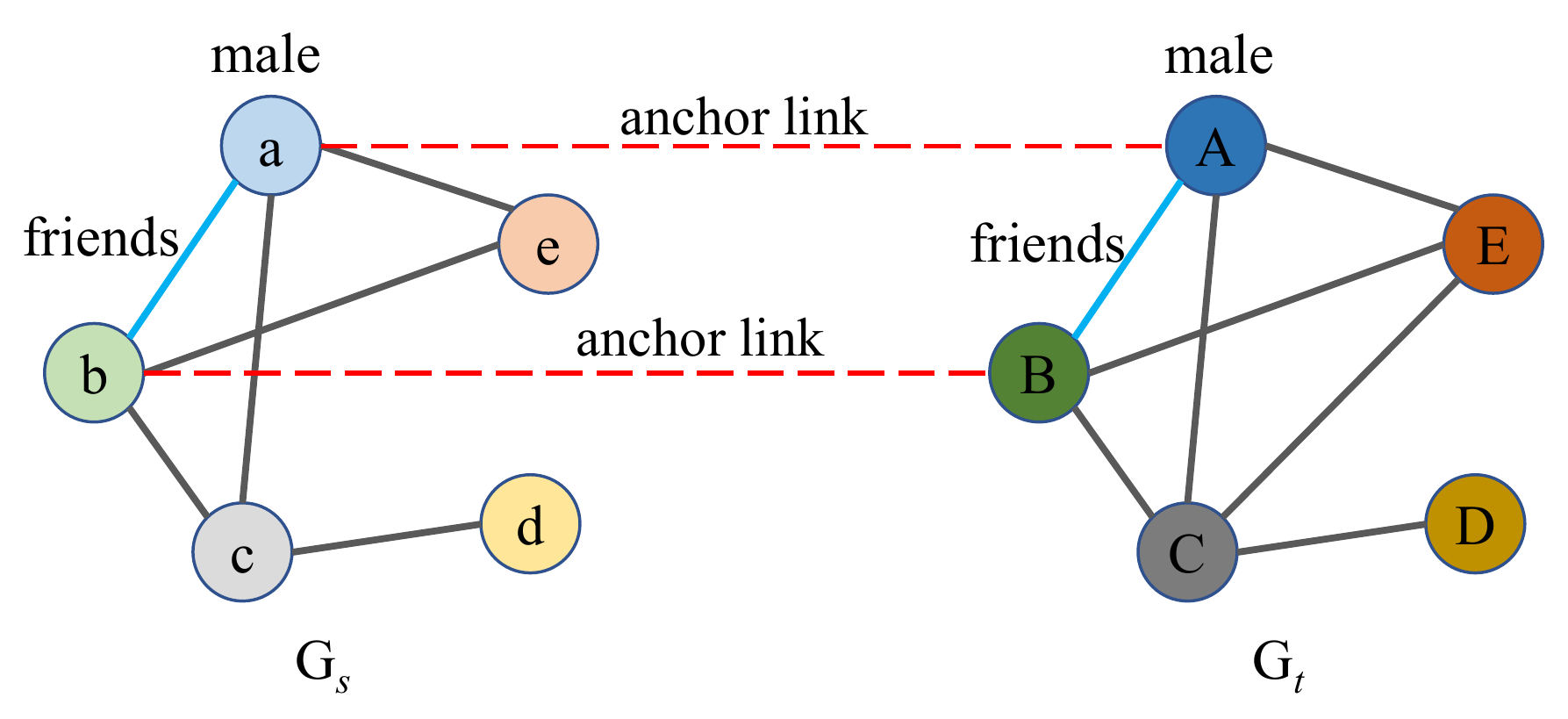}
    \caption{An illustrative example of alignment consistency.  Attribute Consistency: the gender attribute of node ``a'' and its corresponding node ``A'' are both ``male''; Topology Consistency: nodes ``a'' and ``b'' are connected in ${G_s}$, and their corresponding nodes ``A'' and ``B'' are also connected in ${G_t}$.}
    \label{fig: alignment consistency}
\end{figure}

\section{Methodology} \label{sec: methodology}
The framework of the proposed HTC is shown in Fig. \ref{fig: framework}. We introduce details of HTC in this section. 

\begin{figure*}[thb]
    \setlength{\abovecaptionskip}{-0.0cm}   
    \setlength{\belowcaptionskip}{-0.3cm}  
    \centering 
    \includegraphics[width=0.95\textwidth]{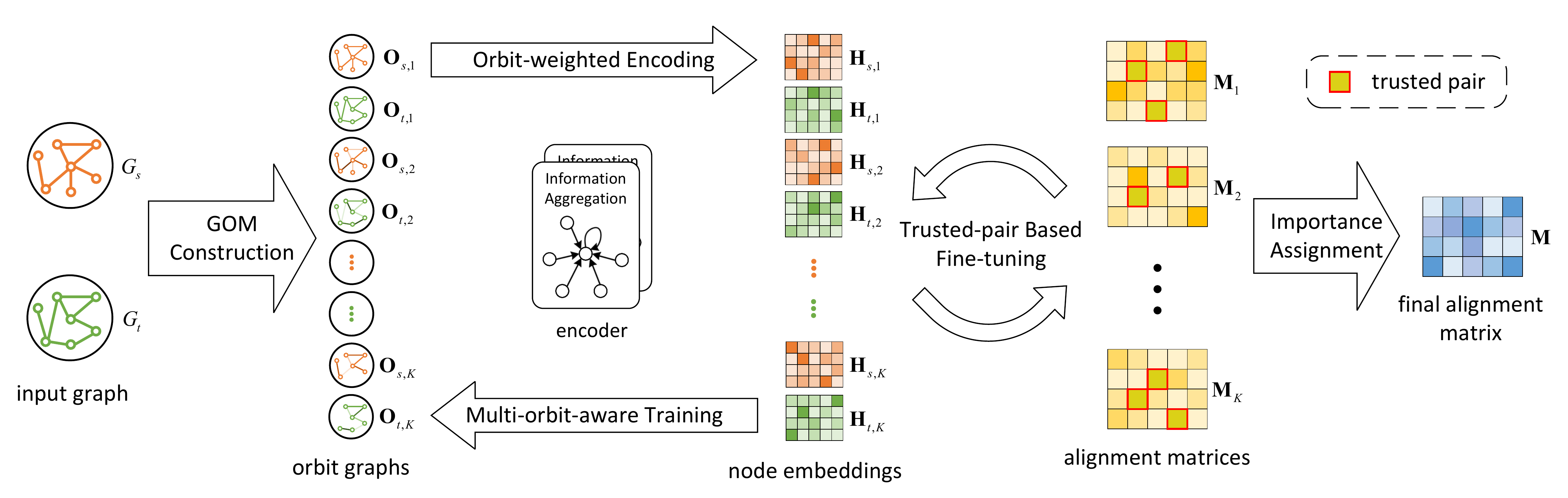}
    \caption{The framework of HTC.}
    \label{fig: framework}
\end{figure*}

\subsection{Higher-order Topological Consistency}
\begin{figure}[htbp]
    \setlength{\abovecaptionskip}{-0.0cm}   
    \setlength{\belowcaptionskip}{-0.3cm}  
    \centering   
    \includegraphics[width=0.48\textwidth]{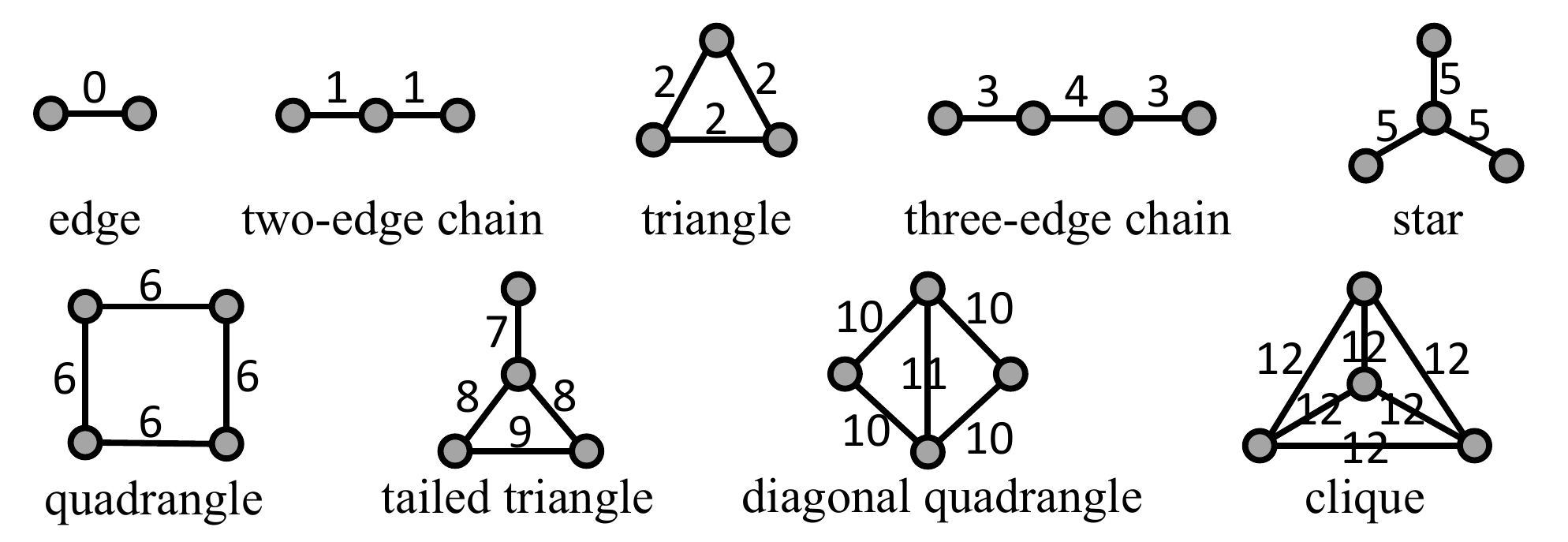}
    \caption{Induced graphlets with 2-4 nodes and automorphism orbits.}
    \label{fig: graphlets and orbits}
\end{figure}
\begin{figure}[htbp]
    \setlength{\abovecaptionskip}{0.0cm}   
    \setlength{\belowcaptionskip}{-0.2cm}  
    \centering   
    \includegraphics[width=0.48\textwidth]{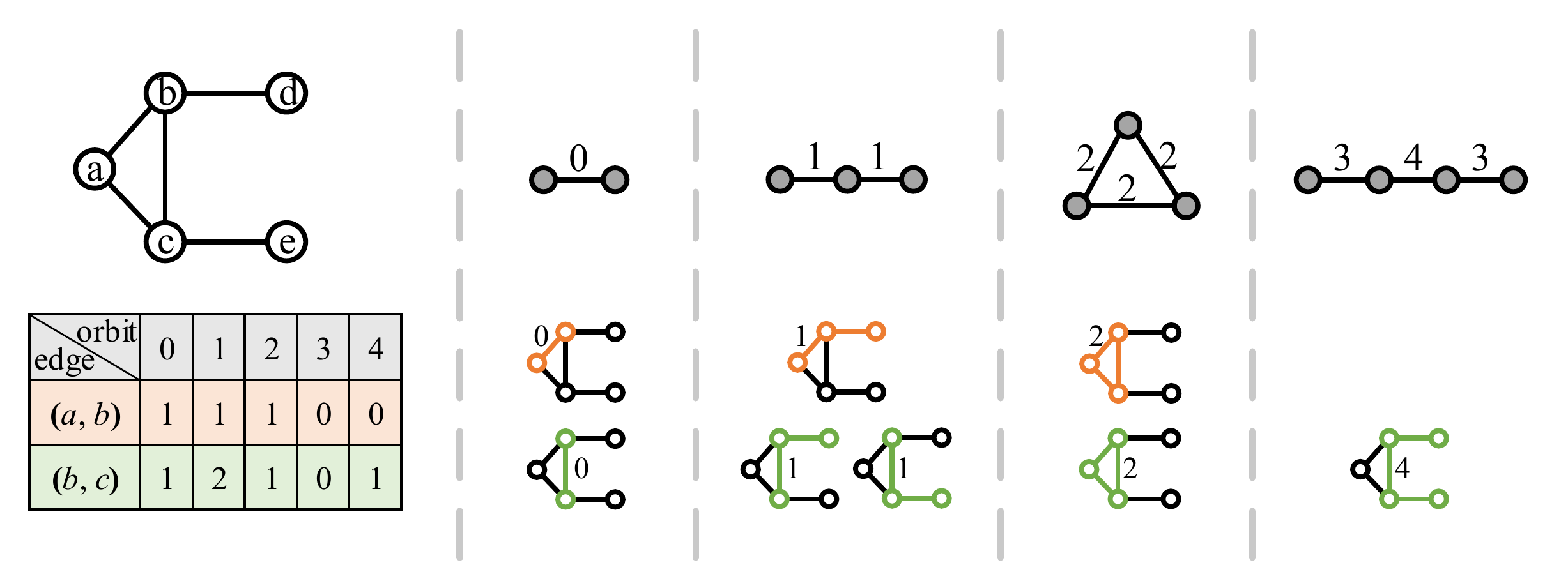}
    \caption{As an illustrative example, only the first five orbits are considered here. For node ${b}$, edge ${(a,b)}$ and ${(b,c)}$ are treated exactly the same if only consider two-node connectivity (orbit 0). However, taking other higher-order topological patterns into account, as listed in the left table, these two edges can be further distinguished since the frequencies at which they occur on orbit 1 and 4 are different, which indicates different roles that ${(a,b)}$ and ${(b,c)}$ play in the same subgraph.}
    \label{fig: illustration of orbit counting}
\end{figure}  
We first construct graphlet orbit matrix and define the higher-order topological consistency based on it.

\textit{Graphlet} is a term used to denote a connected network with a small number of nodes \cite{prvzulj2004modeling}. Any induced subgraph can be grouped into a certain type of graphlet if they are isomorphic. $G'=(\mathcal{V}',\mathcal{E}')$ is an \textit{induced subgraph} of $G=(\mathcal{V},\mathcal{E})$ if and only if $\mathcal{V}'\in \mathcal{V}$ and $\mathcal{E}'=\{(u,v)\in \mathcal{E} | u,v\in \mathcal{V}'\}$. Furthermore, the edges of each graphlet can be clustered into \textit{orbits} with respect to the graphlet automorphisms~\cite{solava2012graphlet}. For instance, there are 9 graphlets and 13 edge orbits for subgraphs with 2-4 nodes as shown in Fig. \ref{fig: graphlets and orbits}. Orbits define the ``roles" of edges within the graphlet. As an example, in the graphlet of a three-edge chain, one edge represents the bridge and the remaining two edges can be connected through the bridge; the edges of this graphlet thus form two different orbits (numbered 4 and 3, respectively). Moreover, edges around a node can be distinguished by comparing their occuring frequency on multiple orbits, as in the illustrative example shown in Fig. \ref{fig: illustration of orbit counting}.

We construct a set of \textit{graphlet orbit matrix (GOM)}, denoted as $\mathcal{O}=\{\mathbf{O}_0, \mathbf{O}_1, \cdots, \mathbf{O}_K\}$, where $K$ matrices correspond to $K$ orbits. Each GOM contains the frequency of edges on the corresponding orbit. Formally, given a graph $G$ with a vertex set of size $n$ and any two vertices $i$ and $j$ of it, the $k$th GOM $\mathbf{O}_k\in \mathbb{N}^{n \times n}$ is defined as follows:
\begin{equation}
\begin{split}
    \mathbf{O}_k(i,j)= &\text{ the number of times that edge } (i,j) \\
    &\text{ occurs on orbit } k.
\end{split}
\end{equation}

The greater the value of $\mathbf{O}_k(i,j)$ is, the stronger the connection between node $i$ and node $j$ is with respect to the orbit $k$. In addition to the above weighted definition, GOM can also be binary, in which $\mathbf{O}_k(i,j)$ equals to 1 as long as the edge $(i,j)$ is observed on orbit $k$ for at least one time, otherwise the value of $\mathbf{O}_k(i,j)$ is 0. In the binary case, the distinctions between edges are weakened. Referring to the example in Fig. \ref{fig: illustration of orbit counting}, the value of edge $(b,c)$ on orbit 1 would be 1, which is the same as that of edge $(a,b)$ if using binary setup. Hence, in this paper, we mainly consider the weighted form of GOM.  

Taking the value of each entry in GOM as the weight of the corresponding edge, GOMs exactly represent different levels of induced networks, as shown in Fig.~\ref{fig: orbit network}. Therefore, we can mathematically define higher-order topological consistency based on GOMs:

\begin{definition}
\textbf{High-order Topological Consistency.} 

Given two nodes $u, v \in \mathcal{V}_s$ and their corresponding nodes $u', v' \in \mathcal{V}_t$, they are regarded as satisfying $k$-order topological consistency, if their edges have identical weights in terms of the $k$-th GOM, \textit{i.e.,} $\mathbf{O}_{s,k}(u, v) = \mathbf{O}_{t,k}(u', v')$.
\end{definition}

\subsection{Orbit-weighted Encoding} \label{subsec: Encoding}
To integrate attribute consistency and higher-order topological consistency, we elegantly merge higher-order topological information into the aggregation process of GCN. Instead of treating all edges equally, we respect the fact that edges playing different roles in each high-order structure should be given different weights when passing messages.

\textbf{Weighted aggregation.} We naturally assign the orbit-defined weight to an edge for passing information between nodes. Formally, the feature of node $i$ on $l$th layer and $k$th orbit can be obtained by:  
\begin{equation}
    \mathbf{h}_{k,i}^l=f\left(\sum\nolimits_{j\in\mathcal{N}(i)}\mathbf{O}_k(i,j)\mathbf{h}_{k,j}^{l-1}\mathbf{W}^{l-1}\right),
    \label{eq: GCN}
\end{equation}
which can be performed efficiently in matrix formulation:
    $\mathbf{H}_{k}^l=f\left(\mathbf{O}_k\mathbf{H}_{k}^{l-1}\mathbf{W}^{l-1}\right)$, 
where $\mathbf{H}\in \mathbb{R}^{n \times d^{l-1}}$ is a feature matrix, $f$ is a nonlinear activation function, and $\mathbf{W}^{l-1}\in \mathbb{R}^{d^{l-1} \times d^l}$ is a trainable weight matrix for node representation encoding.

\textbf{Modified self-connection.} In Eq. (\ref{eq: GCN}), only information from neighbors is considered to generate a node's feature. To include the information from a node itself, we need to add self-connections. Note that the values of a weighted orbit matrix denote the frequency of edges occuring on a certain orbit, which may be far bigger than 1. In this case, using a typical identity matrix as the self-connection matrix is likely to result in the issue that the impact of self-connection is significantly weakened compared with its neighbors. Here, we define a more appropriate self-connection matrix with respect to orbit $k$ as follows:
\begin{equation}
    \mathbf{C}_k(i,i)=
    \begin{cases}
    1, & \text{if } \max\limits_{1 \le j \le n} \mathbf{O}_k(i,j) = 0;\\
    \max\limits_{1 \le j \le n} \mathbf{O}_k(i,j), & \text{otherwise}.
    \end{cases}
    \label{eq: self-loop}
\end{equation}
where $\max_{1 \le j \le n} \mathbf{O}_k(i,j) = 0$ means that node $i$ is not connected to any other node on orbit $k$. Intuitively, this matrix allows a node to either assign an importance to itself that is in line with that of its most important neighbor, or just consider its own information if there are no neighbors around it. Hence, the orbit matrix can be modified by $\mathbf{\tilde{O}}_k= \mathbf{O}_k + \mathbf{C}_k$.

\textbf{Normalized Laplacian.} According to the aggregation mechanism of GCN, nodes with large values in $\mathbf{\tilde{O}}_k$ will have large values in their feature representation, and vice versa. This may cause vanishing or exploding gradients \cite{kipf2016semi}. Hence, in practice, the procedure of normalization is required. The symmetric normalized Laplacian matrix on orbit $k$ can be obtained by $\mathbf{\tilde{L}}_k= \mathbf{\tilde{F}}_k^{-\frac{1}{2}}\mathbf{\tilde{O}}_k\mathbf{\tilde{F}}_k^{-\frac{1}{2}}$, where $\mathbf{\tilde{F}}_k(i,i)=\sum_{j=1}^n\mathbf{\tilde{O}}_k(i,j)$ is a diagonal frequency matrix.

\textbf{Forward encoding.} 
Say the GCN encoder contains $L$ hidden layers, the output embeddings of two graphs, denoted as $\mathbf{H}_{s,k}^L$ and $\mathbf{H}_{t,k}^L$ respectively, can be obtained through the forward encoding process: 
\begin{equation}
    \mathbf{H}_{s,k}^L=f^{L-1}\left(\mathbf{\tilde{L}}_{s,k}\cdots f^0\left(\mathbf{\tilde{L}}_{s,k}\mathbf{X}_s\mathbf{W}^0\right)\cdots \mathbf{W}^{L-1}\right),
    \label{eq: feedforward s}
\end{equation}
\begin{equation}
    \mathbf{H}_{t,k}^L=f^{L-1}\left(\mathbf{\tilde{L}}_{t,k}\cdots f^0\left(\mathbf{\tilde{L}}_{t,k}\mathbf{X}_t\mathbf{W}^0\right)\cdots \mathbf{W}^{L-1}\right).
    \label{eq: feedforward t}
\end{equation}
where the non-linear activation functions $\mathcal{F}=\{f^0, f^1, \cdots, f^{L-1}\}$ and the weight parameters $\mathcal{W}=\{\mathbf{W}^0, \mathbf{W}^1, \cdots, \mathbf{W}^{L-1}\}$ are \textbf{shared} between source and target graphs.

\textbf{Theoretical Analysis.} We mathematically prove that high-order structural consistency and attribute consistency will lead to identical node embeddings across two networks through our proposed encoding mechanism. First, we have the following lemma.

\begin{lemma}
For two nodes in the same graph $u$ and $u'\in \mathcal{V}$, if there exists a matching set $M=\{(v,v')|v\in \mathcal{N}(u), v'\in\mathcal{N}(u')\}$ and nodes of each matching pair have identical attributes, \textit{i.e.} $\mathbf{X}(v) = \mathbf{X}(v')$ and identical topological patterns in terms of orbit $k$, \textit{i.e.} $\mathbf{O}_{k}(u,v) = \mathbf{O}_{k}(u',v')$, then the embeddings of $u$ and $u'$ are identical after encoded by one GCN layer, \textit{i.e.} $\mathbf{H}_{k}(u)=\mathbf{H}_{k}(u')$.
    \label{lem: inner-similarity}
\end{lemma}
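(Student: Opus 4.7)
The plan is to unfold the single orbit-weighted GCN layer at $u$ and at $u'$ and show, term by term, that the two resulting expressions agree once the matching set $M$ is used to pair neighbors. Apart from the hypotheses stated in the lemma, I would make explicit the natural additional assumption $\mathbf{X}(u)=\mathbf{X}(u')$ (which amounts to treating $u$ as its own orbit-$k$ self-neighbor via $\mathbf{C}_k$); everything else required, notably equality of self-loop strengths and of normalization factors, should follow from the matching itself.

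Concretely, starting from Equations~(2)--(5) I would write
\begin{equation*}
\mathbf{H}_k(u) = f\!\left(\tilde{\mathbf{L}}_k(u,u)\,\mathbf{X}(u)\mathbf{W}^0 + \sum_{v\in\mathcal{N}(u)} \tilde{\mathbf{L}}_k(u,v)\,\mathbf{X}(v)\mathbf{W}^0\right),
\end{equation*}
and the analogous identity for $u'$, and then reindex the second sum along the bijection $v \leftrightarrow v'$ supplied by $M$. By hypothesis $\mathbf{O}_k(u,v)=\mathbf{O}_k(u',v')$ on matched pairs, so $\mathbf{\tilde O}_k$ agrees on matched pairs; the self-loop correction $\mathbf{C}_k(u,u)=\max_v \mathbf{O}_k(u,v)$ in Equation~(3) is invariant under this relabelling, so $\mathbf{C}_k(u,u)=\mathbf{C}_k(u',u')$, and summing rows yields $\mathbf{\tilde F}_k(u,u)=\mathbf{\tilde F}_k(u',u')$. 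Matched neighbor attributes coincide by assumption, and the shared parameter $\mathbf{W}^0$ and activation $f$ then force the arguments of $f$ in the two expansions to be equal, giving $\mathbf{H}_k(u)=\mathbf{H}_k(u')$.

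The step I expect to be the real obstacle is the symmetric normalization $\tilde{\mathbf{L}}_k=\mathbf{\tilde F}_k^{-1/2}\mathbf{\tilde O}_k\mathbf{\tilde F}_k^{-1/2}$: the factor $\mathbf{\tilde F}_k(v,v)^{-1/2}$ attached to a neighbor's contribution depends on the orbit-$k$ degree of $v$ itself, not merely on the scalar $\mathbf{O}_k(u,v)$. To close this gap I would either strengthen the matching hypothesis so that matched neighbors also have matching orbit-$k$ row sums (which is the natural reading of ``identical topological patterns in terms of orbit $k$''), or observe that for a strictly local one-layer claim one may fold the normalization into redefined edge weights without affecting the argument, since what is actually used are only the matched scalar equalities $\tilde{\mathbf{L}}_k(u,v)=\tilde{\mathbf{L}}_k(u',v')$ and $\tilde{\mathbf{L}}_k(u,u)=\tilde{\mathbf{L}}_k(u',u')$. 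Either route delivers the claimed equality of embeddings.
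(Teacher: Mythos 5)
Your proposal takes essentially the same route as the paper's proof: expand the single orbit-weighted layer at $u$ and at $u'$, reindex the neighbor sum along the matching, and conclude from the attribute equality together with the shared $f^0$ and $\mathbf{W}^0$. The differences are ones of care rather than strategy: the paper's expansion sums only over $\mathcal{N}(u)$ (silently dropping the self-connection term, and hence never invoking $\mathbf{X}(u)=\mathbf{X}(u')$, which you rightly note is needed if that term is kept), and it simply asserts that $\mathbf{O}_{k}(u,v)=\mathbf{O}_{k}(u',v')$ implies $\mathbf{\tilde{L}}_{k}(u,v)=\mathbf{\tilde{L}}_{k}(u',v')$ --- exactly the normalization step you single out as the obstacle. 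Your concern is well founded: with $\mathbf{\tilde{L}}_k=\mathbf{\tilde{F}}_k^{-1/2}\mathbf{\tilde{O}}_k\mathbf{\tilde{F}}_k^{-1/2}$ the entry $\mathbf{\tilde{L}}_k(u,v)$ carries the factor $\mathbf{\tilde{F}}_k(v,v)^{-1/2}$, which depends on the neighbor's own orbit-$k$ row sum and is not pinned down by the stated hypotheses, so either your strengthened reading of ``identical topological patterns'' (matched neighbors also have equal orbit-$k$ degrees) or your absorbed-weight variant is genuinely required to close the argument that the paper leaves implicit.
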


\begin{proof}
    On the one hand,
    \begin{equation*}
    \begin{split}
        \mathbf{H}_{k}(u) &=f^0\left(\mathbf{\tilde{L}}_{k}(u,:)\mathbf{X}\mathbf{W}^0\right)\\ &=f^0\left(\sum_{v\in \mathcal{N}(u)}\mathbf{\tilde{L}}_{k}(u,v)\mathbf{X}(v)\mathbf{W}^0\right);
    \end{split}
    \end{equation*}
    on the other hand,
    \begin{equation*}
    \begin{split}
        \mathbf{H}_{k}(u')&=f^0\left(\mathbf{\tilde{L}}_{k}(u',:)\mathbf{X}\mathbf{W}^0\right)\\
        &=f^0\left(\sum_{v'\in \mathcal{N}(u')}\mathbf{\tilde{L}}_{k}(u',v')\mathbf{X}(v')\mathbf{W}^0\right).
    \end{split}
    \end{equation*}
    Since $\mathbf{O}_{k}(u,v) = \mathbf{O}_{k}(u',v')$ for all $(v,v')$ pairs, we have $\mathbf{\tilde{L}}_{k}(u,v) = \mathbf{\tilde{L}}_{k}(u',v')$. Taking  $\mathbf{X}(v) = \mathbf{X}(v')$ into consideration, it is satisfied that $\sum_{v\in \mathcal{N}(u)}\mathbf{\tilde{L}}_{k}(u,v)\mathbf{X}(v) = \sum_{v'\in \mathcal{N}(u')}\mathbf{\tilde{L}}_{k}(u',v')\mathbf{X}(v')$. Hence, $\mathbf{H}_{k}(u)=\mathbf{H}_{k}(u')$.
\end{proof}

Actually, Lemma \ref{lem: inner-similarity} indicates that orbit-defined aggregation inherits the characteristics of the typical aggregation manner of GCN that attribute and topology similarity of nodes can be transformed into the similarity of node embeddings. To leverage the power of orbit-weighted aggregation for alignment tasks, we extend Lemma \ref{lem: inner-similarity} to the scenario of two networks and obtain the following proposition.

\begin{proposition}
    Suppose that two nodes $u \in \mathcal{V}_s$, $u' \in \mathcal{V}_t$ have an anchor link between them, and there exists a matching set $M=\{(v,v')|v\in \mathcal{N}_s(u), v'\in\mathcal{N}_t(u')\}$. All matching pairs $(v,v')$ satisfy the attribute consistency, \textit{i.e.} $\mathbf{X}_s(v) = \mathbf{X}_t(v')$ and the $k$-order topological consistency, \textit{i.e.} $\mathbf{O}_{s,k}(u,v) = \mathbf{O}_{t,k}(u',v')$, then the embeddings of $u$ and $u'$ after encoded by the shared GCN parameters are identical, \textit{i.e.} $\mathbf{H}_{s,k}(u)=\mathbf{H}_{t,k}(u')$.
    \label{pro: consistency to similarity}
\end{proposition}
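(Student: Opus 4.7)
The plan is to adapt the single-graph argument of Lemma~\ref{lem: inner-similarity} almost verbatim, leveraging the critical fact emphasized in Section~\ref{subsec: Encoding} that the weight parameters $\mathcal{W}$ and activations $\mathcal{F}$ are \textbf{shared} between the source and target encoders. First I would expand both embeddings as
\begin{equation*}
    \mathbf{H}_{s,k}(u) = f^0\!\left(\sum_{v\in\mathcal{N}_s(u)} \mathbf{\tilde{L}}_{s,k}(u,v)\,\mathbf{X}_s(v)\,\mathbf{W}^0\right),
\end{equation*}
\begin{equation*}
    \mathbf{H}_{t,k}(u') = f^0\!\left(\sum_{v'\in\mathcal{N}_t(u')} \mathbf{\tilde{L}}_{t,k}(u',v')\,\mathbf{X}_t(v')\,\mathbf{W}^0\right),
\end{equation*}
and then show that the matching set $M$ induces a term-by-term identification that makes the two sums coincide. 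Once this is in place, the shared $\mathbf{W}^0$ and $f^0$ finish the argument.

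The technical heart is to verify $\mathbf{\tilde{L}}_{s,k}(u,v) = \mathbf{\tilde{L}}_{t,k}(u',v')$ for every matched pair $(v,v')\in M$. I would break this into two substeps. First, for the unnormalized modified orbit matrix $\mathbf{\tilde{O}}_k = \mathbf{O}_k + \mathbf{C}_k$, the off-diagonal entries agree directly from the hypothesis $\mathbf{O}_{s,k}(u,v) = \mathbf{O}_{t,k}(u',v')$, and the diagonal self-connection $\mathbf{C}_k(u,u) = \max_j \mathbf{O}_k(u,j)$ agrees across graphs because $M$ realises a weight-preserving bijection between $\mathcal{N}_s(u)$ and $\mathcal{N}_t(u')$, so the two multisets of incident orbit weights have the same maximum. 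Second, summing the matched entries gives $\mathbf{\tilde{F}}_{s,k}(u,u) = \mathbf{\tilde{F}}_{t,k}(u',u')$, so the symmetric normalisation $\mathbf{\tilde{L}}_k = \mathbf{\tilde{F}}_k^{-1/2}\mathbf{\tilde{O}}_k\mathbf{\tilde{F}}_k^{-1/2}$ preserves entrywise equality on matched pairs. Combining this with the attribute consistency $\mathbf{X}_s(v) = \mathbf{X}_t(v')$, the two aggregated vectors inside $f^0$ coincide, and the shared weight and activation conclude $\mathbf{H}_{s,k}(u) = \mathbf{H}_{t,k}(u')$.

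The main obstacle I anticipate is the clean treatment of the self-connection matrix $\mathbf{C}_k$: unlike the standard identity self-loop, its diagonal depends globally on the whole neighbourhood of a node via a $\max$, so the proof must extract a bit more from $M$ than a mere edgewise weight equality, namely that $M$ is a full bijection between $\mathcal{N}_s(u)$ and $\mathcal{N}_t(u')$ (so that the two multisets of incident orbit weights, and hence their maxima and sums, coincide). A secondary but worth-flagging point is that the statement as written concerns one layer of encoding; to lift the conclusion to the full $L$-layer encoder of Eqs.~(\ref{eq: feedforward s})-(\ref{eq: feedforward t}) one would layer an induction on top, assuming matching conditions propagate one hop deeper per layer, which is a strictly stronger hypothesis but the standard one to invoke in anchor-consistency arguments.
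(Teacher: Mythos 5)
Your proposal follows essentially the same route as the paper's own proof: expand the one-layer, orbit-weighted aggregation for $u$ and $u'$, identify the summands term-by-term through the matching set $M$ using attribute and $k$-order topological consistency, and conclude from the sharing of $f^0$ and $\mathbf{W}^0$ across the two graphs. Where you differ is in being more careful than the paper at the step the paper simply asserts, namely $\mathbf{\tilde{L}}_{s,k}(u,v)=\mathbf{\tilde{L}}_{t,k}(u',v')$: you explicitly handle the modified self-connection $\mathbf{C}_k$ (correctly observing that this needs $M$ to be a weight-preserving bijection of the neighborhoods so the maxima of incident orbit weights agree) and the row sum $\mathbf{\tilde{F}}_{k}(u,u)$. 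Your closing remark that the $L$-layer claim requires an induction under an $L$-hop consistency hypothesis matches the paper's final sentence.

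One leap remains in your normalization substep, and it is worth naming because the ``so'' hides it: the symmetric normalization gives $\mathbf{\tilde{L}}_{k}(u,v)=\mathbf{\tilde{O}}_{k}(u,v)\big/\sqrt{\mathbf{\tilde{F}}_{k}(u,u)\,\mathbf{\tilde{F}}_{k}(v,v)}$, so entrywise equality on a matched pair also requires $\mathbf{\tilde{F}}_{s,k}(v,v)=\mathbf{\tilde{F}}_{t,k}(v',v')$, i.e.\ the total orbit-weighted degree of the \emph{neighbor} $v$ must equal that of $v'$. The stated hypotheses only constrain orbit weights of edges incident to $u$ and $u'$, so this does not follow; one needs either a strengthened (two-hop) consistency assumption or a row-normalized Laplacian in place of the symmetric one. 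To be clear, the paper's proof has exactly the same unacknowledged step, since it passes from $\mathbf{O}_{s,k}(u,v)=\mathbf{O}_{t,k}(u',v')$ to $\mathbf{\tilde{L}}_{s,k}(u,v)=\mathbf{\tilde{L}}_{t,k}(u',v')$ with no comment on the degree factors; your writeup is, if anything, closer to making the needed assumption explicit, and patching it (by assuming the matching also preserves the orbit degrees of matched neighbors, or by switching the normalization) completes the argument.
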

\begin{proof}
    Similar to the proof of Lemma \ref{lem: inner-similarity}, we first expand the expression of embeddings of node $u$ and $u'$. For node $u$,
    \begin{equation*}
    \begin{split}
        \mathbf{H}_{s,k}(u)&=f_s^0\left(\mathbf{\tilde{L}}_{s,k}(u,:)\mathbf{X}_s\mathbf{W}_s^0\right)\\
        &=f_s^0\left(\sum_{v\in \mathcal{N}_s(u)}\mathbf{\tilde{L}}_{s,k}(u,v)\mathbf{X}_s(v)\mathbf{W}_s^0\right);    
    \end{split}    
    \end{equation*}
    for node $u'$,
    \begin{equation*}
    \begin{split}
        \mathbf{H}_{t,k}(u')&=f_t^0\left(\mathbf{\tilde{L}}_{t,k}(u',:)\mathbf{X}_t\mathbf{W}_t^0\right)\\
        &=f_t^0\left(\sum_{v'\in \mathcal{N}_t(u')}
        \mathbf{\tilde{L}}_{t,k}(u',v')\mathbf{X}_t(v')\mathbf{W}_t^0\right).
    \end{split}    
    \end{equation*}
    According to attribute consistency and topological consistency in terms of orbit $k$, we have $\mathbf{X}_s(v) = \mathbf{X}_t(v')$ and $\mathbf{\tilde{L}}_{s,k}(u,v) = \mathbf{\tilde{L}}_{t,k}(u',v')$.
    Besides, as the GCN encoder is shared between source and target graphs, $f_s^0 = f_t^0=f^0$ and $\mathbf{W}_s^0=\mathbf{W}_t^0=\mathbf{W}^0$ are satisfied. Hence, $\mathbf{H}_{s,k}(u) = \mathbf{H}_{t,k}(u')$. Similarly, for $L$-layer GCN, $\mathbf{H}_{s,k}^L(u)=\mathbf{H}_{t,k}^L(u')$ holds if corresponding $L$-hop neighborhood is consistent.
\end{proof}

The key factor in extending Lemma \ref{lem: inner-similarity} to Proposition \ref{pro: consistency to similarity} is the sharing of GCN encoder between two networks, which guarantees that nodes in two networks are embedded by the same mapping mechanism. Therefore, the consistency constraints of alignment can be transformed into the similarity of node embeddings and an extra step for aligning feature space can be omitted, which is claimed to be necessary in some previous embedding-based methods \cite{chen2020cone}. 

\subsection{Multi-orbit-aware Training} \label{subsec: Training}
As opposed to just using the trivial edge-defined topology information, multiple higher-order topological patterns give us more comprehensive information for aligning nodes.

\textbf{Orbit-reconstruction loss.} Without alignment labels, we adopt auto-encoder scheme to train the model parameters $\mathcal{W}$. We reconstruct each orbit Laplacian matrix using generated representations. The matrix of source graph on orbit $k$ is rebuilt by:  
\begin{equation}
    \mathbf{\hat{L}}_{s,k}=\mathbf{H}_{s,k}^L \cdot {\mathbf{H}_{s,k}^L}^\top.
    \label{eq: reconstruct}
\end{equation}
The reconstructed Laplacian matrix of target graph $\mathbf{\hat{L}}_{t,k}$ can also be obtained in a similar way. The orbit-reconstruction loss for orbit $k$ is further computed as follows:
\begin{equation}
    loss_k = {\left\|\mathbf{\tilde{L}}_{s,k}-\mathbf{\hat{L}}_{s,k}\right\|}_F + {\left\|\mathbf{\tilde{L}}_{t,k}-\mathbf{\hat{L}}_{t,k}\right\|}_F,
    \label{eq: loss k}
\end{equation}
where ${\|\cdot\|}_F$ denotes the Frobenius norm which is used to measure the difference between the original Laplacian matrix $\mathbf{\tilde{L}}_{\cdot,k}$ and the reconstructed one $\mathbf{\hat{L}}_{\cdot,k}$. By minimizing $loss_k$, the GCN encoder can be regarded as \textit{kth-orbit-aware} since the useful information required for reconstructing the $k$th orbit matrix is captured.



To be \textit{multi-orbit-aware}, all orbit-reconstruction loss items are added up to the overall objective:
\begin{equation}
    \Gamma = \sum_k loss_k.
    \label{eq: total loss}
\end{equation}

As $\Gamma$ is minimized by gradient descent optimizers, like Adam \cite{kingma2014adam}, the similarity of the embeddings generated by the encoder is able to reflect the corresponding topological and attribute consistency. The multi-orbit-aware process is summarized in Algorithm~\ref{alg: co_embedding}.

\begin{algorithm}[htbp]
\caption{\textsc{Multi-orbit-aware Embedding}}
\label{alg: co_embedding}
\begin{algorithmic}[1]

\REQUIRE Orbit graphs $\mathcal{O}=\{\mathbf{O}_0, \mathbf{O}_1, \cdots, \mathbf{O}_K\}$;
\ENSURE  Embeddings of source graph and target graph $\mathcal{H}_{s}=\{\mathbf{H}_{s,0}, \mathbf{H}_{s,1}, \cdots, \mathbf{H}_{s,K}\}$, $\mathcal{H}_{t}=\{\mathbf{H}_{t,0}, \mathbf{H}_{t,1}, \cdots, \mathbf{H}_{t,K}\}$.
\STATE Obtain modified Laplacian matrices $\tilde{\mathcal{L}}_{s}=\{\mathbf{\tilde{L}}_{s,0}, \mathbf{\tilde{L}}_{s,1}, \cdots, \mathbf{\tilde{L}}_{s,K}\}$, $\mathcal{\tilde{L}}_{t}=\{\mathbf{\tilde{L}}_{t,0}, \mathbf{\tilde{L}}_{t,1}, \cdots, \mathbf{\tilde{L}}_{t,K}\}$; 
\STATE Randomly initialize the encoding parameters $\mathcal{W}=\{\mathbf{W}^0, \mathbf{W}^1, \cdots, \mathbf{W}^{L-1}\}$;
\FOR{some epochs}
    \FOR{$k = 0 \rightarrow K$}
        \STATE Generate node embeddings $\mathbf{H}_{s,k}$, $\mathbf{H}_{t,k}$ of source and target graphs on orbit $k$ layer-by-layer using Eq. (\ref{eq: feedforward s})-(\ref{eq: feedforward t}); 
        \STATE Decode $\mathbf{H}_{s,k}$, $\mathbf{H}_{t,k}$ to obtain reconstructed Laplacian matrices $\mathbf{\hat{L}}_{s,k}$, $\mathbf{\hat{L}}_{t,k}$ respectively using Eq. (\ref{eq: reconstruct});
        \STATE Compute the reconstruction loss of source graph and target graph simultaneously using Eq. (\ref{eq: loss k});
    \ENDFOR
    \STATE Compute the total reconstruction loss $\Gamma$ of all patterns by using Eq. \ref{eq: total loss};
    \STATE Minimize $\Gamma$ to update $\mathcal{W}$ with the gradient descent optimizer;
\ENDFOR
\STATE Generate embeddings $\mathcal{H}_{s}$, $\mathcal{H}_{t}$;
\RETURN $\mathcal{H}_{s}$, $\mathcal{H}_{t}$

\end{algorithmic}
\end{algorithm}

\subsection{Trusted-pair based Fine-tuning} \label{subsec: Fine-tuning}
In this subsection, we use $\mathbf{h}_s$ and $\mathbf{h}_t$ to denote the embeddings of source graph and target graph output by the last hidden layer on any orbit, omitting the superscripts $L$ and subscripts $k$ for brevity. 

\textbf{Hubness problem.} By Lemma~\ref{lem: inner-similarity} and Proposition~\ref{pro: consistency to similarity}, both potential anchor nodes across different graphs and nodes with similar attributes and structure in the same graph will have embeddings as similar as possible. In high dimensional embedding space, these similar embeddings will be located closely. As a result, some nodes, called \textit{hubs}~\cite{dinu2014improving}, tend to be disproportionally nearest neighbors of many other nodes in the other graph. Direct nearest-neighbor rule favors these hubs, while at most, one of all the nearby nodes around a hub is a true anchor node. Thus, to diminish the hubness problem and identify nodes that are more isolated and more trustworthy to align with other nodes, we introduce the locally isolated similarity index (LISI).

\textbf{Locally isolated similarity index.} Due to the properties of translation invariance and scale invariance, we use Pearson correlation coefficient to measure the similarity between the embeddings of two nodes $\mathbf{h}_s \in \mathbf{H}_s$ and $\mathbf{h}_t\in \mathbf{H}_t$:
\begin{equation} 
    corr(\mathbf{h}_s, \mathbf{h}_t) = \frac{\sum\limits_i\left[\mathbf{h}_s\left(i\right)-\mathbf{\overline{h}}_{s}\right]\left[\mathbf{h}_{t}\left(i\right)-\mathbf{\overline{h}}_{t}\right]}{\sqrt{\sum\limits_i\left[\mathbf{h}_s\left(i\right)-\mathbf{\overline{h}}_{s}\right]^2\sum\limits_i\left[\mathbf{h}_{t}\left(i\right)-\mathbf{\overline{h}}_{t}\right]^2}},
    \label{eq: corr}
\end{equation}
where $\mathbf{\overline{h}}_s$, $\mathbf{\overline{h}}_t$ is the mean value of $\mathbf{h}_s$ and $\mathbf{h}_t$. 

We consider a bi-partite neighborhood graph, in which each node of a given graph is connected to its $m$ nearest neighbors in the other graph. Using $\mathcal{N}_t(\mathbf{h}_s)$ to denote the neighborhood of node embedding $\mathbf{h}_s$ on target graph space, the hubness degree of node $\mathbf{h}_s$ is measured by computing its mean similarity to its target neighborhood as follow:
\begin{equation} 
    D_t(\mathbf{h}_s)=\frac{\sum_{\mathbf{h}_t \in \mathcal{N}_t(\mathbf{h}_s)}corr(\mathbf{h}_s,\mathbf{h}_t)}{m}.
    \label{eq: mean sim}
\end{equation}
According to the definition, the larger $D_t(\mathbf{h}_s)$ is, the more likely $\mathbf{h}_s$ is to be a hub. Similarly, denoted by $D_s(\mathbf{h}_t)$, the mean similarity between $\mathbf{h}_t$ and its source neighborhood can be computed. They are used to calculate the similarity measure LISI:
\begin{equation}
    \mathrm{LISI}(\mathbf{h}_s,\mathbf{h}_t)=2corr(\mathbf{h}_s, \mathbf{h}_t) - D_t(\mathbf{h}_s) - D_s(\mathbf{h}_t).
    \label{eq: lisi}
\end{equation}

Here, a bigger value of LISI can be obtained when these two embeddings are similar enough to each other (the first item is large) while they are as less similar as possible to their neighbors in the other graph (the latter two items are small). In other words, LISI aims to identify two similar embeddings that are locally isolated rather than being hubs of dense areas.

\textbf{Trusted pairs.} The LISI values of node pairs across source graph and target graph are computed as the alignment matrix $\mathbf{M}$. If two nodes are each other's nearest neighbors with respect to LISI, they are regarded as a \textit{trusted pair}. Mathematically, a trusted pair $\langle\mathbf{h}_s,\mathbf{h}_t\rangle$ satisfies the following conditions:
\begin{equation}
    \begin{cases}
    \mathbf{h}_t = \arg\max\limits_{\mathbf{h}\in \mathbf{H}_t}\mathrm{LISI}(\mathbf{h}_s,\mathbf{h}),\\
    \mathbf{h}_s = \arg\max\limits_{\mathbf{h}\in \mathbf{H}_s}\mathrm{LISI}(\mathbf{h}_t,\mathbf{h}).
    \end{cases}
    \label{eq: mutual nn}
\end{equation}

\textbf{Aggregation coefficient fine-tuning.}  According to topology consistency, if two nodes are connected in a network, then their anchor nodes (if both exist) are likely to maintain connectivity in another network. In other words, if node $A$ and node $a$ form an anchor link across source graph and target graph, then potential anchor links are likely to exist across the neighborhood of node $A$ and that of node $a$. To find them, we have the following proposition.   

\begin{proposition}
By assigning bigger aggregation coefficients to trusted (or known) anchor nodes, more potential anchor nodes around them can be identified.
\end{proposition}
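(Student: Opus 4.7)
The plan is to make the statement precise as follows. Fix a trusted (or known) anchor pair $(u, u')$ with $u \in \mathcal{V}_s$ and $u' \in \mathcal{V}_t$, and perturb the orbit matrices by multiplying the entries $\mathbf{O}_{s,k}(u,v)$, $\mathbf{O}_{s,k}(v,u)$ and $\mathbf{O}_{t,k}(u',v')$, $\mathbf{O}_{t,k}(v',u')$ by a common factor $\alpha > 1$ for every $v \in \mathcal{N}_s(u)$ and every $v' \in \mathcal{N}_t(u')$, leaving all other entries unchanged. I then want to show that, after one additional forward pass of the orbit-weighted encoder, the LISI score of every candidate pair $(v,v')$ that satisfies the attribute and $k$-order topological consistency assumptions of Proposition~\ref{pro: consistency to similarity} strictly increases relative to unmatched candidates, so that additional pairs $(v,v')$ will satisfy the mutual-nearest-neighbor criterion in Eq.~(\ref{eq: mutual nn}) and therefore become new trusted pairs.

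First I would re-expand $\mathbf{H}_{s,k}(v)$ using Eq.~(\ref{eq: GCN}) and split the aggregation into the trusted-neighbor term indexed by $u$ and the remainder, and do the same for $\mathbf{H}_{t,k}(v')$ with $u'$. Because the perturbation scales $(u,v)$ and $(u',v')$ by the same $\alpha$ and the encoder weights $\mathbf{W}^l$ and nonlinearities $f^l$ are shared across both graphs, the trusted-pair channel contributes matched amplified terms to $\mathbf{H}_{s,k}(v)$ and $\mathbf{H}_{t,k}(v')$; invoking the argument of Lemma~\ref{lem: inner-similarity} on the rescaled neighborhoods then preserves the equality $\mathbf{H}_{s,k}(v) = \mathbf{H}_{t,k}(v')$ while strictly boosting the component that both embeddings share through $u$ and $u'$. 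Plugging this into the Pearson correlation of Eq.~(\ref{eq: corr}) and then into Eq.~(\ref{eq: lisi}) shows that $\mathrm{LISI}$ of the matched pair $(v,v')$ increases. For a false candidate $w' \in \mathcal{V}_t$ whose neighborhood either does not contain $u'$ or contains it with a different orbit weight, the amplification either does not apply or breaks the matched-contribution equality, so the similarity to $v$ is not boosted in the same way; the hubness-subtraction terms $D_t$ and $D_s$ in Eq.~(\ref{eq: lisi}) then further widen the gap in favor of the true pair, making the mutual-nearest-neighbor test easier for $(v,v')$ to pass.

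The main obstacle will be the symmetric normalization $\tilde{\mathbf{L}}_{s,k} = \tilde{\mathbf{F}}_{s,k}^{-1/2}\tilde{\mathbf{O}}_{s,k}\tilde{\mathbf{F}}_{s,k}^{-1/2}$, since scaling a single entry of $\tilde{\mathbf{O}}_{s,k}$ also changes the row-sum $\tilde{\mathbf{F}}_{s,k}(v,v)$ and hence every other term in $v$'s aggregation. I would handle this by requiring the perturbation to be applied symmetrically on both $(u,v)$ and $(v,u)$ on the source side and on the corresponding entries on the target side, so that the degree change at $v$ exactly mirrors the degree change at $v'$; the equality of normalized Laplacian entries along matched pairs is then preserved and the argument of Proposition~\ref{pro: consistency to similarity} carries through. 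A cleaner fallback is to carry out the amplification before normalization and treat the proof as a one-parameter extension of Lemma~\ref{lem: inner-similarity} with $\alpha$ multiplying selected rows of the matching set $M$; either way, the conclusion is that the anchor-induced embedding equality propagates outward to the next neighborhood shell, so new trusted pairs emerge when Eq.~(\ref{eq: mutual nn}) is re-evaluated after fine-tuning.
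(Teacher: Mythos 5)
Your core move coincides with the paper's: expand the first-layer orbit-weighted aggregation of the to-be-identified node, split off the term contributed by the trusted anchor, use the shared weights $\mathbf{W}^0$, $f^0$ and the attribute consistency of the trusted pair, and argue that amplifying that shared term pulls the two embeddings together. However, there are two genuine problems with how you set this up. First, you assume the candidate pairs $(v,v')$ already satisfy the full attribute and $k$-order consistency of Proposition~\ref{pro: consistency to similarity}; under that assumption their embeddings are already identical by that proposition, the Pearson correlation in Eq.~(\ref{eq: corr}) is already maximal, and your claimed ``strict increase'' of the matched pair's similarity is vacuous. The paper's argument is deliberately weaker in its hypotheses and therefore non-trivial: it only assumes the \emph{trusted} pair shares attributes ($\mathbf{X}_s(v)=\mathbf{X}_t(v')$ in the paper's labeling), makes no consistency assumption on the remaining neighbors of the potential pair, and argues that enlarging the common (trusted) component relative to the unmatched residual makes the two aggregated vectors in the square brackets closer. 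With your hypotheses, the entire burden of the proposition shifts onto your claims about false candidates and the hubness terms $D_t$, $D_s$ in Eq.~(\ref{eq: lisi}), and those are asserted, not derived: boosting a shared component does not automatically increase correlation after the nonlinearity, and nothing in your sketch controls how $D_t(\mathbf{h}_s)$ and $D_s(\mathbf{h}_t)$ move, so the conclusion that the mutual-nearest-neighbor test of Eq.~(\ref{eq: mutual nn}) becomes easier to pass does not follow from what you write. The paper never attempts this stronger LISI/mutual-NN claim; its proof stops at ``the embeddings of the potential pair become closer.''

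Second, the normalization obstacle you spend the last paragraph on is self-inflicted. You amplify entries of the raw orbit matrices $\mathbf{O}_{s,k}$, $\mathbf{O}_{t,k}$, which indeed changes $\tilde{\mathbf{F}}_k$ and couples every other entry in the affected rows, and your fix (exactly mirrored degree changes at $v$ and $v'$) requires matched total degrees, which is not guaranteed. The mechanism the proposition is actually about (Eq.~(\ref{eq: update reinforcement vector})--(\ref{eq: new embedding})) multiplies the \emph{already normalized} Laplacian by diagonal reinforcement matrices, $\mathbf{R}_s\tilde{\mathbf{L}}_s\mathbf{R}_s$, so no degrees are recomputed and the paper can treat the entries $\tilde{\mathbf{L}}_{s,k}(u,v)$, $\tilde{\mathbf{L}}_{t,k}(u',v')$ as directly tunable scalars, scaling the whole columns of the trusted nodes synchronously. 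Reformulating your perturbation at the level of $\tilde{\mathbf{L}}$ rather than $\mathbf{O}$ would remove that difficulty and bring your argument in line with the paper's.
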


\begin{proof}
Without loss of generality, suppose that there exist four nodes: $u, v \in \mathcal{V}_s$ and $u', v' \in \mathcal{V}_t$, where $(v, v')$ is a pair of trusted (or known) anchor nodes, $(u,u')$ is a pair of potential anchor nodes that needed to be identified and $v \in \mathcal{N}_s(u), v' \in \mathcal{N}_t(u')$.
The first-layer embeddings of node $u$ and $u'$ can be expanded as:
\begin{equation*}
\begin{split}
    \mathbf{H}_{s,k}^1(u) &=  f^0\Bigg(\sum_{r\in \mathcal{N}_s\left(u\right)} \mathbf{\tilde{L}}_{s,k}\left(u,r\right) \mathbf{X}_s(r)\mathbf{W}^0\Bigg) \\
    &=  f^0\Bigg(\Bigg[\mathbf{\tilde{L}}_{s,k}(u,v) \mathbf{X}_s(v) \\
    & \qquad\quad+\sum_{r\in \mathcal{N}_s(u)\setminus v}\mathbf{\tilde{L}}_{s,k}(u,r)\mathbf{X}_s(r)\Bigg]\mathbf{W}^0\Bigg)\\
\end{split}
\end{equation*}
and 
\begin{equation*}
\begin{split}
    \mathbf{H}_{t,k}^1(u') 
    &= f^0\Bigg(\sum_{r'\in \mathcal{N}_t(u')} \mathbf{\tilde{L}}_{t,k}(u',r') \mathbf{X}_t(r')\mathbf{W}^0\Bigg) \\
    &= f^0\Bigg(\Bigg[\mathbf{\tilde{L}}_{t,k}(u',v') \mathbf{X}_t(v') \\
    &  \qquad\quad+\sum_{r'\in \mathcal{N}_t\left(u'\right)\setminus v'}\mathbf{\tilde{L}}_{t,k}(u',r')\mathbf{X}_t(r')\Bigg] \mathbf{W}^0\Bigg)\\
\end{split}
\end{equation*}
To increase the similarity between $\mathbf{H}_{s,k}^1(u)$ and $\mathbf{H}_{t,k}^1(u')$, we should make the terms in square brackets as close as possible. Note that $\mathbf{\tilde{L}}_{s,k}(u,\cdot)$ and $\mathbf{\tilde{L}}_{t,k}(u',\cdot)$ are scalars, which are generally regarded as aggregating weights, and $\mathbf{X}_s(\cdot)$ and $\mathbf{X}_t(\cdot)$ are vectors. In square brackets, the first item is a vector representing trusted/known anchor nodes, while the other item is a synthesized vector to aggregate the other nodes. As $(v, v')$ is a pair of trusted/known anchor nodes, $\mathbf{X}_s(v)=\mathbf{X}_t(v')$ holds due to attribute consistency. By enlarging the common vector components, namely the trusted-pair-related item, the synthesized vectors are tuned to be closer. In practice, since we do not know which pair of nodes around trusted pairs may have a potential anchor link, we can only tune $\mathbf{\tilde{L}}_{s,k}(:,v)$ and $\mathbf{\tilde{L}}_{t,k}(:,v')$ synchronously such that potential anchor links can be found and existing anchor links can be preserved. The process is similar for the following layers, which is omitted.
\end{proof}

In fact, assigning bigger aggregation coefficients to trusted anchor nodes means that their neighboring potential anchor nodes can receive more similar information. The more trusted nodes around potential anchor nodes, the easier they can be identified. Formally, two vectors $\mathbf{r}_s\in \mathbb{R}^{n_s}$ and $\mathbf{r}_t\in \mathbb{R}^{n_t}$ are maintained to update the reinforcement factors for nodes in source graph and target graph respectively. At the beginning, both of them are initialized to all-one vectors. If node $i \in \mathcal{V}_s$ and node $j \in \mathcal{V}_t$ are identified as trusted pairs, their corresponding reinforcement factors are updated as follows:
\begin{equation}
    \begin{cases}
    \mathbf{r}_s(i) = \beta \cdot \mathbf{r}_s(i),\\
    \mathbf{r}_t(j) = \beta \cdot \mathbf{r}_t(j),
    \end{cases}
    \label{eq: update reinforcement vector}
\end{equation}
where $\beta > 1$ is the reinforcement rate.

The aggregation scheme of GCN is modified as:
\begin{equation}
    \mathbf{H}_s^{l+1} = f^l\left(\mathbf{R}_s\mathbf{\tilde{L}_s}\mathbf{R}_s\mathbf{H}_s^l\mathbf{W}^l\right)
    \label{eq: new embedding}
\end{equation}
where $\mathbf{R}_s$ is a diagonal reinforcement matrix in which $\mathbf{R}_s(i,i)=\mathbf{r}_s(i)$. Likewise, the embeddings of target network are generated by introducing reinforcement matrix $\mathbf{R}_t$.

The fine-tuning process is iteratively conducted as shown in Fig. \ref{fig: framework}. Each loop contains four steps: (a) identify trusted pairs from the alignment matrix; (b) update aggregation coefficients of trusted pairs; (c) generate new node embeddings; (d) compute new alignment matrix. We also need to track the number of trusted pairs $\mathcal{T}$ in every loop. The objective is to maximize the number of trusted pairs identified based on each level of consistency. If the number stops growing, the loop is terminated (the loops for different orbits are independent of each other). The fine-tuning algorithm is summarized in Algorithm \ref{alg: fine-tuning}.
\begin{algorithm}[htbp]
\caption{\textsc{Trusted-pair Based Fine-tuning}}
\label{alg: fine-tuning}
\begin{algorithmic}[1]
{\REQUIRE Embeddings of source graph and target graph $\mathcal{H}_{s}$, $\mathcal{H}_{t}$, reinforcement rate $\beta$;
\ENSURE  Refined alignment matrices $\mathcal{M}=\{\mathbf{M}_0,\mathbf{M}_1 \cdots,\mathbf{M}_K\}$, the maximal number of trusted pairs $\mathcal{T}_{max}=\{T_{m0},T_{m1},\cdots,T_{mK}\}$.
\STATE Initialize reinforcement vectors $\mathbf{r}_s$, $\mathbf{r}_t$ with all-one vectors, number of trusted pairs $\mathcal{T}_{cur}=\{T_{c0},T_{c1},\cdots,T_{cK}\}$, $\mathcal{T}_{max}$ with all-zero vectors;
\FOR{$k = 0 \rightarrow K$}
    \REPEAT
        \STATE ${T}_{mk} \leftarrow {T}_{ck}$;
        \STATE Compute LISI as alignment matrix $\mathbf{M}_k$ using Eq. (\ref{eq: corr})-(\ref{eq: lisi});
        \STATE Identify and count trusted pairs ${T}_{ck}$ from $\mathbf{M}_k$ using Eq. (\ref{eq: mutual nn});
        \STATE Update $\mathbf{r}_s$, $\mathbf{r}_t$ using Eq. (\ref{eq: update reinforcement vector}); 
        \STATE Generate new embeddings $\mathbf{H}_{s,k}$, $\mathbf{H}_{t,k}$ using Eq. (\ref{eq: new embedding});
    \UNTIL ${T}_{ck} \le {T}_{mk}$
\ENDFOR
\RETURN $\mathcal{M}$, $\mathcal{T}_{max}$}
\end{algorithmic}
\end{algorithm}

\subsection{Posterior Importance Assignment}
After refinement procedure is done on each orbit, corresponding alignment matrices $\mathcal{M}=\{\mathbf{M}_0,\mathbf{M}_1 \cdots,\mathbf{M}_K\}$ are obtained. Each of them contains the alignment score of all node pairs, focusing on a specific order of topological consistency. To integrate those scores and produce the final alignment matrix, we consider an alignment-related index: the number of trusted pairs identified in different orders of topological consistency. The motivation is that the more trusted nodes that can be identified, the higher the corresponding embedding quality.

We denote the weight of  $\mathbf{M}_k$ by $\gamma_k$, which is defined as:
\begin{equation}
    \gamma_k = \frac{\mathcal{T}_k}{\sum_i\mathcal{T}_i},
    \label{eq: orbit importance}
\end{equation}
where $\mathcal{T}_i$ denotes the number of trusted pairs identified based on orbit $i$. Then the final alignment matrix is the weighted sum of each orbit-specific matrix: $\mathbf{M} = \sum_k \gamma_k \cdot \mathbf{M}_k$.

Given the final alignment matrix, for each node in source graph, the node with the highest score in target graph is regarded as the most possible anchor node. 

\subsection{Complexity Analysis} \label{subsec: complexity}
Without loss of generality, we denote the number of nodes by $n$, the number of edges by $e$ (non-zero entries of adjacency matrix), the maximal node degree by $D$, and the dimension of hidden features by $d$. 

\textbf{Time complexity.} There are mainly three steps to analyze:
\begin{itemize}
    \item Orbit matrix construction. In this work, we use orbits in graphlets with not more than 4 nodes. Thanks to the scalable Orca algorithm \cite{hovcevar2014combinatorial, hovcevar2016computation}, edge orbit counting can be done in $O(eD^2)$ for 4-node graphlets, which allows us to count orbits on graphs with millions of edges within a second. And constructing orbit-based Laplacian matrices takes $O(e)$.
    \item Multi-orbit-aware embedding. The propagation of GCN encoder takes $O(Ked^L)$  where the numbers of orbits $K$ and hidden layers $L$ are usually small constants.
    \item Trusted-pair based fine-tuning. Identifying trusted-pairs takes $O(Kn^2)$, and refining the aggregation coefficients takes $O(n)$.
\end{itemize}

Therefore, the first two steps take $O(e)$ and the third step takes $O(n^2)$. Note that $O(n^2)$ is dominated by the global alignment task itself ($\mathbf{M}\in \mathbb{R}^{n_s \times n_t}$), and most operations, such as trusted-pair identification, can be efficiently implemented through matrix manipulation. Also, considering the practical running time shown in Fig.~\ref{fig: runtime comparison}, the overall time cost of our approach is less than or comparable to that of most benchmark methods.

{\textbf{Space complexity.} We analyze space complexity from three aspects:
\begin{itemize}
    \item Orbit-relevant storage. Due to the increasing sparsity of higher-order orbits, there are at most $O(e)$ non-zero entries in each orbit matrix. Therefore, the worst-case space complexity for storing $K$ sparse orbit Laplacian matrices is $O(Ke)$.
    \item Embedding-relevant storage. It takes $O(Ld^2)$ to store trainable parameters in $L$ GCN layers, and the space required for storing the features of $n$ nodes in $L$ hidden layers is $O(Lnd)$.
    \item Alignment-relevant storage. In the fine-tuning stage, the LISI matrix is computed for identifying trusted pairs and can be released from memory once the identification process is done. Hence, it takes $O(n^2)$ to store the LISI matrix in the whole refinement process. As for the weighted integration stage, a cumulative approach can be adopted, which only requires maintaining a main matrix and iteratively adding each orbit-specified score matrix, and thus it also takes $O(n^2)$, the same as most alignment methods. 
\end{itemize}
\quad In summary, the overall space complexity is $O(Ke+Ld^2+Lnd+n^2)$. As the space complexity of a common GCN-based alignment approach would be $O(e+Ld^2+Lnd+n^2)$, our method generally just requires extra space for storing \textit{sparse} orbit matrices.
}

\section{Experimental Evaluations} \label{sec: experiment}
In this section, we comprehensively evaluate the superiority
of our unsupervised alignment framework on both real-world and synthetic datasets and compare it with state-of-the-art unsupervised and supervised methods. Other detailed evaluations are also reported which empirically demonstrate the effectiveness of the proposed method. 
\subsection{Experiment Setup}
{\textbf{Datasets.} 
To facilitate a fair comparison, in this work, we evaluate our approach by following the setup of the primary baselines \cite{zhang2016final, trung2020adaptive, trung2020comparative} and using the same datasets as they do, including three widely used real-world benchmark datasets and two synthetic datasets. For synthetic datasets, the source networks are collected from the real world, while the target networks are generated from source networks in the same way as \cite{trung2020adaptive} by randomly removing a certain percentage of the edges in original networks. Node identity is preserved, indicating the alignment groundtruth.}

$\bullet$ \textit{Allmovie \& Imdb}: each node is a movie and each edge between two movies signifies that they have at least one common actor. The former is constructed from Rotten Tomatoes website\footnote{https://www.kaggle.com/ayushkalla1/rotten-tomatoes-movie-database} and the latter from Imdb website\footnote{https://www.kaggle.com/jyoti1706/imdbmoviesdataset}. There are totally 5176 anchor links across these two networks~\cite{trung2020adaptive}.

$\bullet$ \textit{Douban Online \& Douban Offline}: a pair of Chinese social networks, whose nodes denote users and edges represent whether two users are contacts or friends. This pair of networks contains 1118 anchor links~\cite{zhang2016final, trung2020adaptive, trung2020comparative}.

{$\bullet$ \textit{Flickr \& Myspace}: Two subnetworks and their corresponding groundtruths are extracted by \cite{zhang2016final}. Each node represents a user, and part of the user's profile information is treated as attributes. There are 267 anchor links across these two networks~\cite{zhang2016final, trung2020comparative}}.

{$\bullet$ \textit{Econ}: this dataset comes from a economic model of Victoria State, Australia during the banking crisis in 1880 \cite{rossi2015network, trung2020adaptive}. Each node is an organisation (like banks, firms), and the edges denote the contractual relationships between them. Partial edges (from 10\% to 50\%) are randomly removed to construct the target network.}

{$\bullet$ \textit{BN}: this network represents a part of brain structure. The nodes are brain voxels, and each edge is a fiber tract that connects two voxels \cite{amunts2013bigbrain, trung2020adaptive}. Partial edges (from 10\% to 50\%) are randomly removed to construct the target network.}

The detailed statistics of aforementioned datasets are listed in Table. \ref{tab: statis}.   
\begin{table}[htbp]
\caption{\centering {{Statistical details of networks}}}
\begin{center}
\small
\begin{tabular}{lrrrr}
\toprule
Networks & \#Edges & \#Nodes & \#Attrs & Avg. Deg\\
\midrule
Allmovie & 124709 & 6011 & 14 & 41.4\\
Imdb & 119073 & 5713 & 14 & 41.7\\
Douban Online & 8164 & 3906 & 538 & 4.2 \\
Douban Offline & 1511 & 1118 & 538 & 2.7 \\
{Flickr} & {7333} & {6714} & {3} & {2.2} \\
{Myspace} & {11081} & {10733} & {3} & {2.1}\\
Econ & 7619 & 1258 & 20 & 12.1\\
BN & 9016 & 1781 & 20 & 10.1\\
\bottomrule
\end{tabular}
\label{tab: statis}
\end{center}
\end{table}

\textbf{Configurations for proposed framework.} If not stated otherwise, the hyperparameters are set as follows: number of GCN layers $L=2$, embedding dimension $d=200$, learning rate $\eta=0.01$, number of nearest neighbors $m=20$, aggregation reinforcement rate $\beta=1.1$. The sensitivity of key hyperparameters is studied in \ref{subsec: hyperparameter}.

\textbf{Baseline methods.} Other six representative methods are constructed for comparison, including methods that only use topology information ({IsoRank}~\cite{singh2008global}, {PALE}~\cite{heimann2018regal},  {CENALP}~\cite{du2019joint}) and three methods that use both topology information and node attributes ({FINAL}~\cite{zhang2016final}, {REGAL}~\cite{heimann2018regal},  {GAlign}~\cite{trung2020adaptive}). In particular, GAlign is the most related competitor, which is also an unsupervised network alignment method based on GCN. Since other general embedding techniques, like DeepWalk~\cite{perozzi2014deepwalk}/Node2Vec~\cite{grover2016node2vec}/LINE~\cite{tang2015line}, have been discussed and surpassed in papers of PALE/CENALP/REGAL, we compare with these alignment methods directly. According to the original settings of these baseline methods, some of them  have to introduce supervised information. Therefore, 10\% of ground truth is used to train PALE and CENALP and generate the prior alignment matrix for FINAL and IsoRank. 

\begin{table*}[ht]
\caption{\centering {{Numerical evaluations of alignment performance on three real-world datasets}}}
\small
\begin{center}
\begin{tabular}{lcccccccccccc}
\toprule
\multirow{2.5}{*}{Methods} & \multicolumn{4}{c}{Allmovie \& Imdb} & \multicolumn{4}{c}{Douban Online \& Offline} &  \multicolumn{4}{c}{Flickr \& Myspace} \\
\cmidrule(lr){2-5} \cmidrule(lr){6-9} \cmidrule(lr){10-13}
& {$p@1$} & $p@10$ & $\mathrm{MRR}$ & Time(s) & $p@1$ & $p@10$ & $\mathrm{MRR}$ & Time(s) & $p@1$ & $p@10$ & $\mathrm{MRR}$ & Time(s) \\
\midrule
HTC & \textbf{0.8436} & \underline{0.9051} & \textbf{0.8574} & {87.51} & \textbf{0.4651}  & \textbf{0.8005} & \textbf{0.5766} & \textbf{8.15} & \textbf{0.0150} & \underline{0.0487} & \textbf{0.0289} & \textbf{42.94}\\
GAlign  & \underline{0.8214} & 0.9003 & \underline{0.8496} & {92.48} & \underline{0.4526} & \underline{0.7800} & \underline{0.5632} & \underline{10.44} & \underline{0.0112} & {0.0412} & {0.0267} & \underline{48.29} \\
FINAL   & 0.7647 & \textbf{0.9609} & 0.8459 & \textbf{67.71} & 0.4383 & 0.7710 & 0.5539 & {183.57} & {0.0075} & {0.0412} & {0.0212} & {358.2}\\
PALE    & 0.6947 & 0.7159 & 0.7601 & 1810.28 & 0.0775 & 0.4479 & 0.1901 & 86.82 & 0.0000 & 0.0135 & 0.0079 & 166.11\\
CENALP  & 0.4866 & 0.8327 & 0.5693 & 43572.24 & 0.2572 & 0.4618 & 0.3537 & 8871.37 & 0.0075 & 0.0375 & 0.0244 & 21264.59 \\
IsoRank & 0.4653 & 0.6427 & 0.5271 & \underline{69.89} & 0.0903 & 0.2048 & 0.1299 & 19.29 & 0.0037 & 0.0337 & 0.0162 & 132.63\\
REGAL   & 0.0953 & 0.3869 & 0.1888 & 70.14 & 0.0528 & 0.2326 & 0.1120 & 20.47 & \underline{0.0112} & \textbf{0.0562} & \underline{0.0272} & 124.07\\
\midrule
Abs. Imp. & 0.0222 & - & 0.0078 & - & 0.0125 & 0.0205 & 0.0134 & 2.29 & 0.0038 & - & 0.0017 & 5.35 \\
Rel. Imp. & 2.70\% & - & 0.92\% & - & 2.76\% & 2.63\% & 2.38\% & 21.93\% & 33.93\% & - & 6.25\% & 11.08\% \\
\bottomrule
\end{tabular}
\label{tab: compare}
\end{center}
\end{table*}

\begin{figure*}[!ht]
\begin{subfigure}{0.33\textwidth}
\centering
\includegraphics[width=0.95\linewidth]{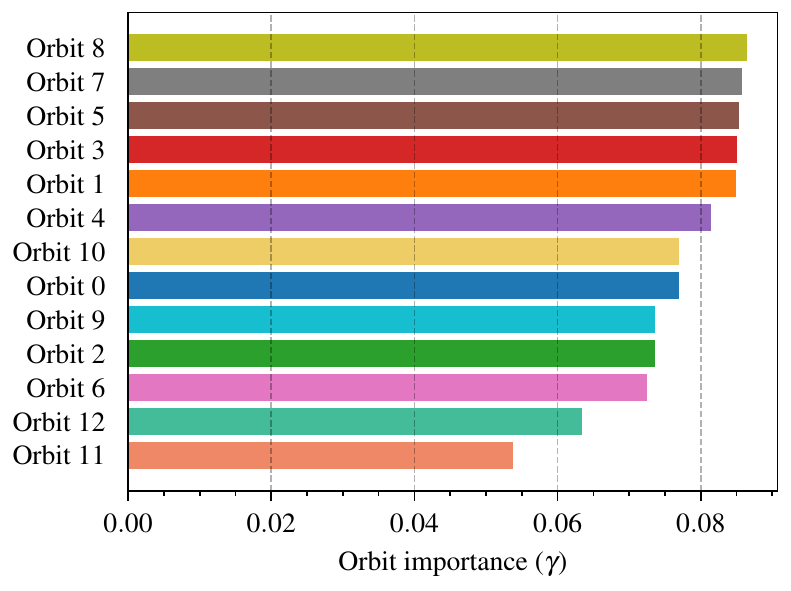} 
\caption{Allmovie \& Imdb}
\label{fig: mv}
\end{subfigure}
\begin{subfigure}{0.33\textwidth}
\centering
\includegraphics[width=0.95\linewidth]{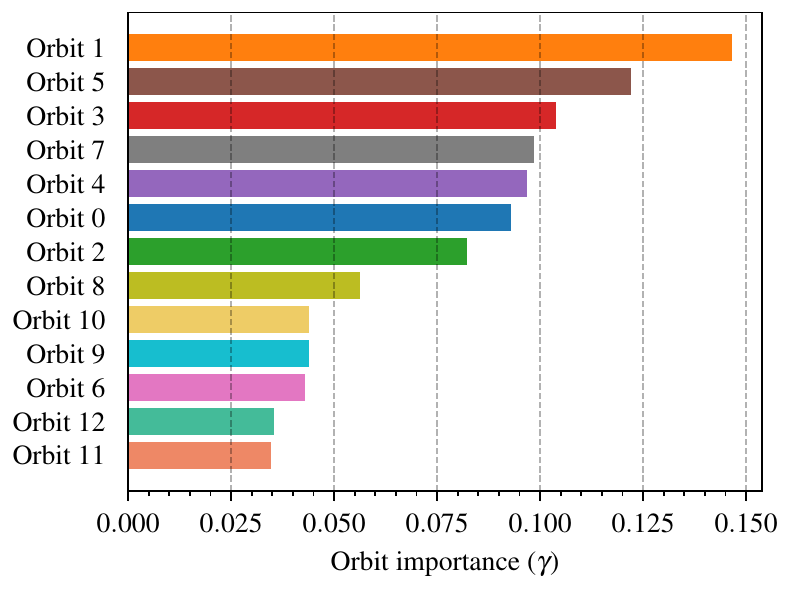}
\caption{Douban online \& offline}
\label{fig: db}
\end{subfigure}
\begin{subfigure}{0.33\textwidth}
\centering
\includegraphics[width=0.95\linewidth]{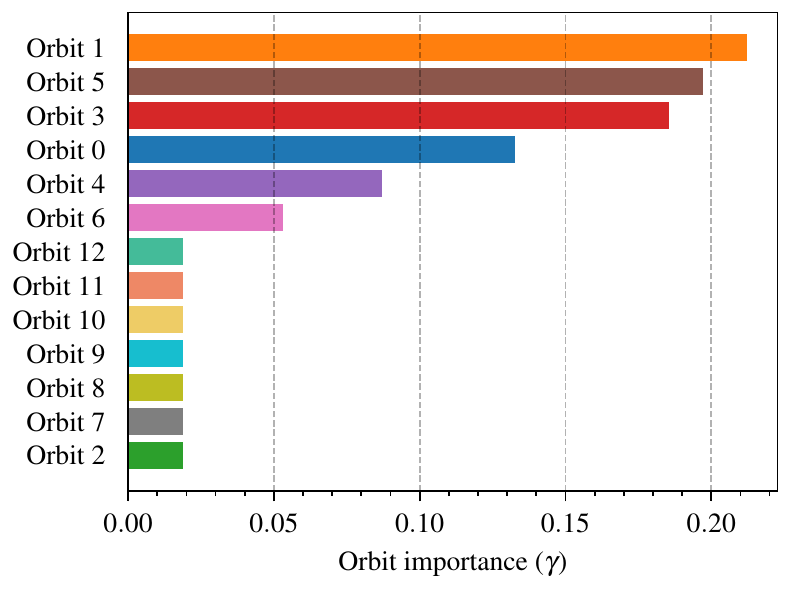}
\caption{Flickr \& Myspace}
\label{fig: fm}
\end{subfigure}
\caption{{The importance of orbits on three real-world datasets.}}
\label{fig: importance}
\end{figure*}
\textbf{Metrics.} Two numerical metrics are used to comprehensively evaluate and compare the effectiveness of different alignment approaches. The first metric is $precision@q$ (also known as $accuracy@q$ or $success@q$) \cite{zhang2016final}. It calculates the rate that a node's true anchor node can be found in its top-$q$ candidates. Mathematically, $precision@q$ is defined as:
\begin{equation}
    precision@q = \frac{\sum_{v_s^*\in \mathcal{V}_s}\sum_{v_t\in\mathcal{V}_t^q(v_s^*)}\mathds{1}\left(\left( v_s^*,v_t\right)\in\mathcal{L^*}\right)}{|\mathcal{L^*|}}
\end{equation}
where $\mathcal{L^*}$ denotes the set of ground-truth anchor pairs, $\mathcal{V}_t^q(v_s^*)$ denotes a subset of $\mathcal{V}_t$ which contains the target nodes that hold the top-$q$ highest alignment scores in the row $\mathbf{M}(v_s^*,\cdot)$, and $\mathds{1}(\cdot)$ is an indicator function.

Another metric, Mean Reciprocal Rank ($\mathrm{MRR}$) \cite{iofciu2011identifying}, also known as Mean Average Precision ($\mathrm{MAP}$) \cite{man2016predict}, evaluates the average performance of reciprocal rank. $\mathrm{MRR}$ is defined as follows:
\begin{equation}
    \mathrm{MRR} = \frac{1}{|\mathcal{L}^*|}\sum_i\frac{1}{r_i}
\end{equation}
where $r_i$ is the rank of $i$th true anchor node's alignment score in the corresponding row of $\mathbf{M}$. Note that for both $precision@q$ and $\mathrm{MRR}$, the higher the values are, the better the performance is. To mitigate randomness, all reported results are averaged over 20 runs.

{\textbf{Computing infrastructures.} All experiments are conducted on a compute node with 32 Intel Xeon Gold 6242 CPUs (2.80GHz) and 2 Tesla V100-SXM2 GPUs (32GB of memory each). As for the main softwares, all methods are implemented based on NetworkX-2.6.0 and PyTorch-1.11.0 (if required).}

\begin{figure*}[htbp]
\centering
\begin{minipage}[t]{0.48\textwidth}
\centering
\includegraphics[width=0.95\textwidth]{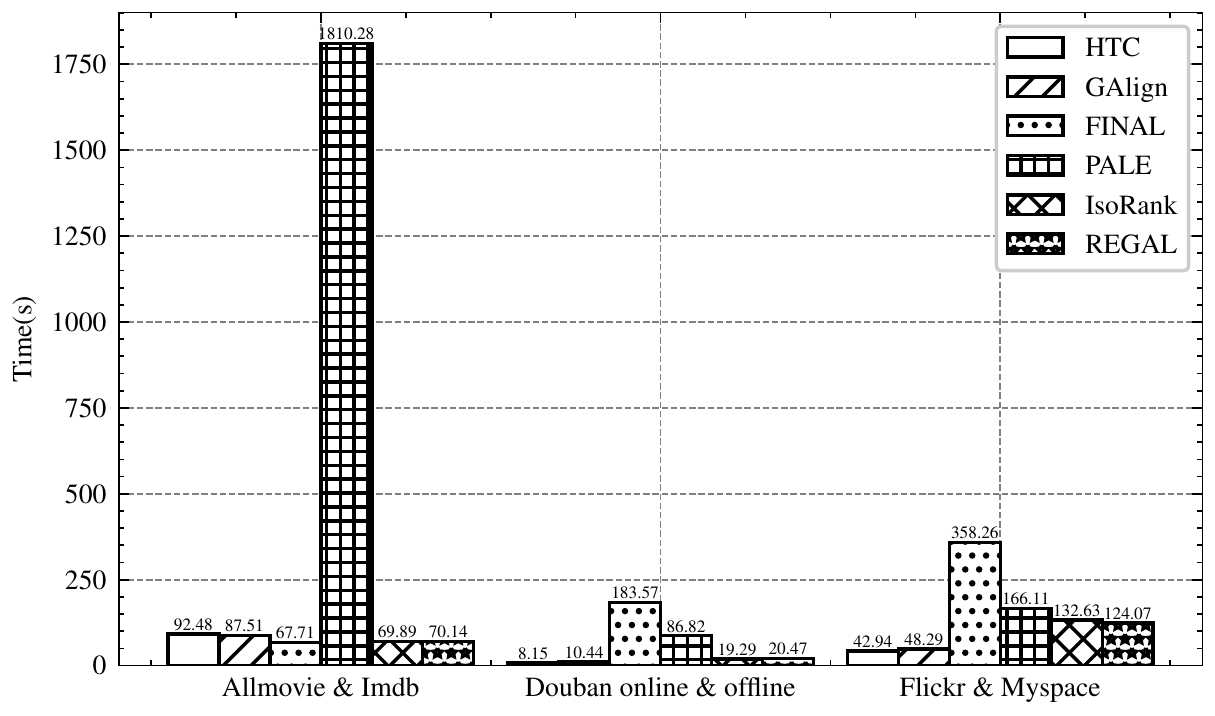}
\caption{{Runtime comparisons between HTC and baselines}}
\label{fig: runtime comparison}
\end{minipage}
\begin{minipage}[t]{0.48\textwidth}
\centering
\includegraphics[width=0.95\textwidth]{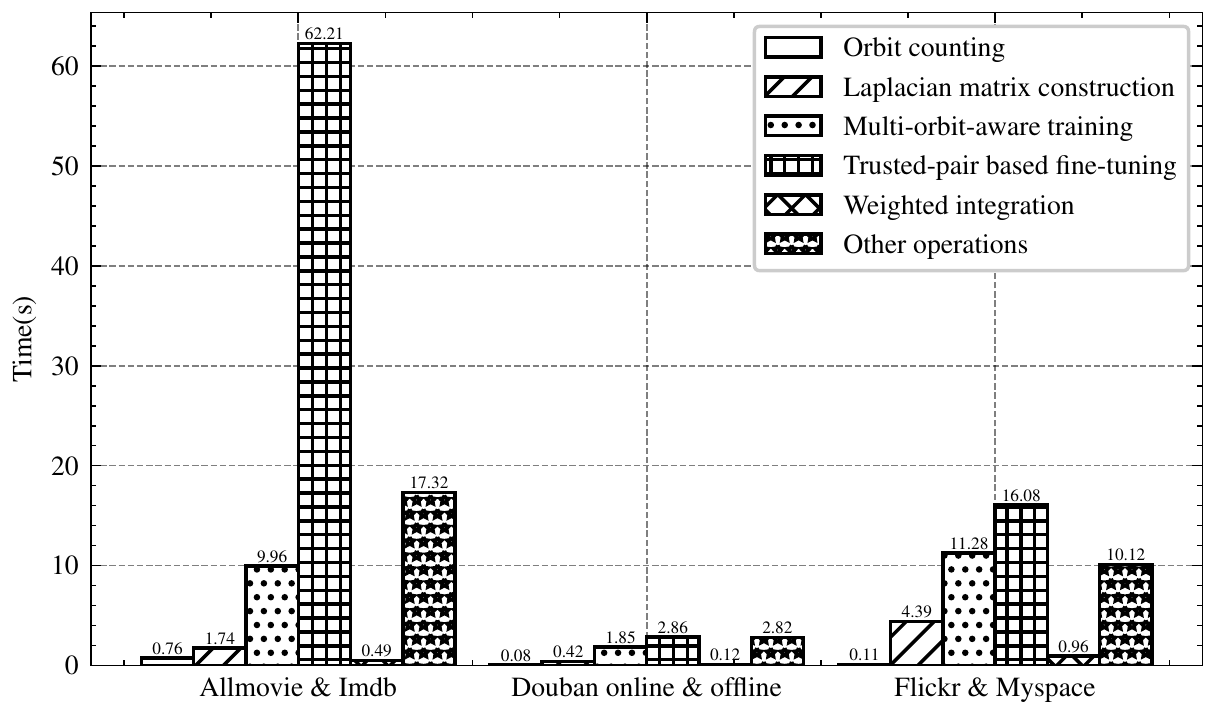}
\caption{{Runtime decomposition of HTC}}
\label{fig: runtime decomposition}
\end{minipage}
\end{figure*}

\subsection{Overall Effectiveness} \label{subsec: effectiveness}
The overall alignment performance is first investigated. We report the $precision@1$, $precision@10$, and $\mathrm{MRR}$ metrics for numerical comparison purpose, which are presented in Table \ref{tab: compare}. {As can be seen, the proposed HTC framework consistently outperforms all baseline models across three pairs of real-world datasets. Specifically, we achieve up to 33.93\%, 2.63\%, and 6.25\% relative improvements over the best results achieved by baselines in terms of $precision@1$, $precision@10$, and $\mathrm{MRR}$, respectively.}

Although 10\% of ground truth is provided for supervised methods, it seems inadequate to make them outstanding. Among them, FINAL is the best-performing supervised model, which achieves a competitive result in terms of $precision@10$ on Allmovie \& Imdb dataset. Even so, compared with it, our HTC is fully unsupervised, and is 10.31\% more accurate on Allmovie \& Imdb dataset in terms of $precision@1$, which is the most important metric.   

As for unsupervised competitors, GAlign is the most powerful one. Like our HTC, it does not focus solely on the original network topology. Instead, GAlign utilizes augmented networks to increase its adaptiveness to structural consistency violation, and thus it is the runner-up on three pairs of datasets in terms of several metrics. {In addition to steadily outperforming GAlign in terms of accuracy, our method likewise exhibits robustness to structural noise, which is further discussed in subsection~\ref{subsec: robustness}.}

{It is worth noting that all alignment methods perform poorly on Flickr \& Myspace datasets. In addition to very sparse network topology and very limited attribute information as listed in Table~\ref{tab: statis}, according to the insights provided in \cite{trung2020comparative}, another important factor that makes Flickr \& Myspace difficult to align is that their groundtruths rarely satisfy the common alignment consistency that neighbours in the source network have connections in the target network. As a result, all methods can hardly achieve satisfactory performance, especially for the $precision@1$ metric. However, our HTC is still the tallest among the short, and REGAL also shows relatively competitive results on this pair of datasets.}

{We also investigate the importance scores of different levels of orbit-weighted topology computed by Eq. \ref{eq: orbit importance} on three pairs of real-world datasets. In Fig. \ref{fig: importance}, we rank orbits by their $\gamma$ values, and the higher the position of the orbit in the plot, the higher its $\gamma$ value. Some interesting and insightful observations are as follows: (1) The overall distributions of orbit weights over three datasets are quite different. The weights on Allmovie \& Imdb datasets have the smallest variance than that of the other two pairs of datasets. This is consistent with network characteristics listed in Table \ref{tab: statis}, where Allmovie \& Imdb are far more dense, with an average degree greater than 40, and thus some higher-order patterns are more likely to exist and make contributions. In contrast, Flickr \& Myspace datasets are extremely sparse, which means they are less likely to form sufficient higher-order patterns, and thus only a few lower-order patterns account for most of the importance scores. The difference in the orbit importance distribution between these three datasets shows that our method can effectively and adaptively focus on more favourable topological consistency information according to specific network characteristics rather than maintaining fixed weights for each level. (2) Another key finding is that orbits 1, 3, and 5 rank in the top five in terms of the importance score in all three datasets, while orbit 0, the most basic edge pattern, is only included in the top five on Flickr \& Myspace datasets. This fact indicates that higher-order consistencies can actually make more contributions to the alignment tasks than the trivial one, which is the core factor that makes HTC work better than other baselines.}
\begin{figure*}[!ht]
\begin{subfigure}{0.49\textwidth}
\centering
\includegraphics[width=0.9\linewidth]{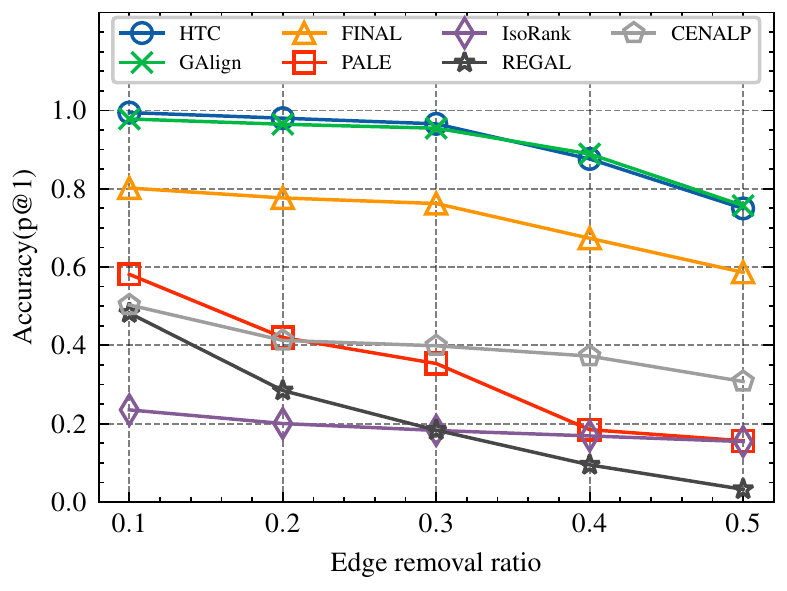} 
\caption{Econ}
\label{fig: econ}
\end{subfigure}
\begin{subfigure}{0.49\textwidth}
\centering
\includegraphics[width=0.9\linewidth]{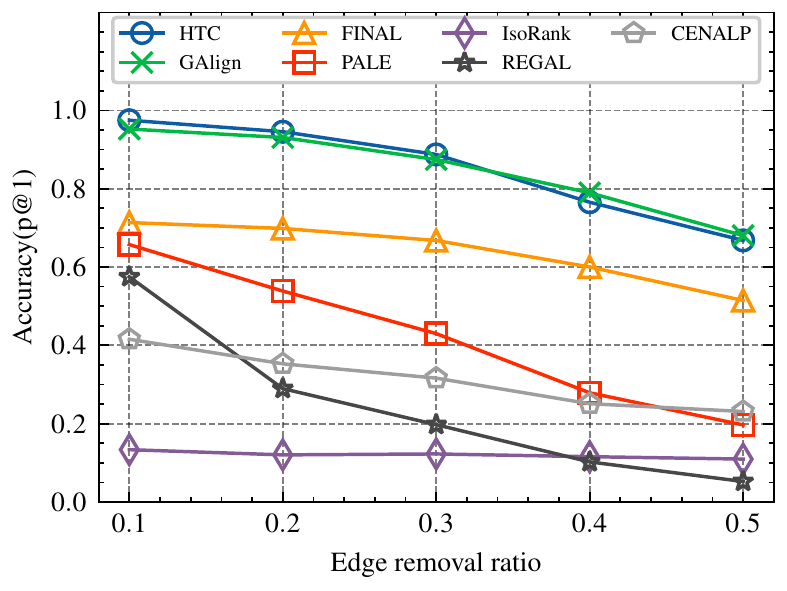}
\caption{BN}
\label{fig: bn}
\end{subfigure}
\caption{{Robustness test against topological noise on two synthetic datasets}}
\label{fig: robustness}
\end{figure*}

{\subsection{Efficiency Analysis}
To comprehensively evaluate the efficiency of our proposed HTC, the runtime of HTC on three pairs of real-world datasets is reported in Table~\ref{tab: compare}. Besides, to facilitate comparison with the baseline methods, we present runtime results in Fig.~\ref{fig: runtime comparison} (CENALP takes much more time and is therefore excluded). As illustrated, HTC takes the least time to achieve the best accuracy on two out of three datasets (Douban online \& offline and Flickr \& Myspace), and the time on another dataset (Allmovie \& Imdb) is also comparable to most benchmarks. Although multiple higher-order topological consistencies are considered in HTC, we do not introduce additional training-required encoding parameters as they are shared by all orbits, and thus will not result in much extra time cost. Also, the termination conditions of the fine-tuning processes on different orbits are independent of each other (refer to Algorithm~\ref{alg: fine-tuning}), which means redundant time consumption for extra running loops can be saved.}

{For an in-depth analysis, we divide the whole process of HTC into six parts, \textit{i.e.}, orbit counting, laplacian matrix construction, multi-orbit-aware training, trusted pair based fine-tuning, weighted integration, and other operations, and their corresponding time consumption is shown in Fig.~\ref{fig: runtime decomposition}. In particular, the time for other operations refers to the rest time consumption in addition to the other five parts of the total time, such as the time for calculating assessment metrics. On three datasets, orbit counting, Laplacian matrix construction, and weighted integration take relatively less time than the other processes. In addition to other operations, most of the time is spent on multi-orbit-aware training and trusted-pair based fine-tuning processes, which are the cores of our method.}

{In summary, taking both effectiveness and efficiency into consideration, our method shows its superiority over baselines.}


{\subsection{Robustness Test} \label{subsec: robustness}
To test the robustness of our proposed high-order topological consistency, in this subsection, we inject different levels of topological noise into synthetic datasets. Specifically, 10\% to 50\% of edges in original source networks are randomly removed to generate the corresponding target networks for Econ and BN datasets. The performance of our HTC is evaluated and compared with that of baselines, which is shown in Fig.~\ref{fig: robustness}. Generally, all methods suffer accuracy degradation as the noise level increases. Among those baselines, PALE and REGAL are more sensitive to structural noise than other baselines, while FINAL and CENALP are more robust. Besides, IsoRank performs less competitively under all different noise levels. In comparison, HTC consistently outperforms all baselines except GAlign by a remarkable margin. Although our method does not target noise adaptation as GAlign, it can still achieve better performance than GAlign when the noise ratio is smaller than 0.3 (specifically, when ratio equals 0.1, the $p@1$ metrics of HTC$|$GAlign on Econ and BN are 0.9944$|$0.9781 and 0.9748$|$0.9524, respectively), and the degree of performance degradation is generally comparable to GAlign, which is far smaller than the other baselines like PALE and REGAL and demonstrates the robustness of our HTC. For example, the performance degradations of HTC, GAlign, and PALE on Econ dataset are 0.2448, 0.2213, and 0.4252, respectively.}

{We owe the robustness of HTC to our multi-orbit-aware training mechanism based on two facts. First, not all higher order orbits are present for a given edge, \textit{e.g.}, the counts of orbit 3 and 4 for edge $(a,b)$ are both 0 as shown in Fig.~\ref{fig: illustration of orbit counting}, which is equivalent to removing the edge from the corresponding orbit. Second, the encoder is shared and co-trained by all different orders of topological patterns. As a result, HTC is trained to be robust to structural noise, \textit{i.e.}, edge removal/missing.}

\subsection{Ablation Test.}
To distinguish the contributions made by different components of HTC, we construct several variants and compare their performance on two real-world datasets (Because of the issues with Flickr \& Myspace mentioned in subsection~\ref{subsec: effectiveness}, we exclude them from the ablation test and the subsequent hyperparameter study). All of them are built based on a two-layer GCN and all related hyperparameters are consistent with HTC. 
\begin{itemize}
    \item \textit{HTC-L} (\underline{l}ow-order, w/o fine-tuning): only considers low-order topology, namely the simplest pattern (orbit 0) to learn node features and compute alignment matrix without a refinement process.
    \item \textit{HTC-H} (\underline{h}igh-order, w/o fine-tuning): utilizes multi-orbit-aware training techniques to learn higher-order topological consistency while removes the refinement step.
    \item \textit{HTC-LT} (\underline{l}ow-order, w/ fine-\underline{t}uning): learns node embeddings based on the simplest pattern, while the alignment matrix is computed after conducting trusted-pair based fine-tuning.
    \item \textit{HTC-DT} (\underline{d}iffusion, w/ fine-\underline{t}uning): replaces our GOMs with diffusion matrices of varying orders, which help to capture a larger neighborhood of structural information (best result is achieved with $k = 5$ and teleport probability $\alpha = 0.15$) \cite{klicpera2019diffusion}.
\end{itemize}

The results are reported in Table \ref{tab: ablation}. Compared with the other four variants, HTC (namely HTC-HT) performs far better in terms of both $p@1$ and $\mathrm{MRR}$ metrics. Specifically, HTC achieves up to a 49\% absolute improvement in $p@1$ over HTC-L, which ignores the proposed higher-order topological consistency and computes the alignment matrix directly without fine-tuning. This verifies the significant contributions we make to the original GCN in the context of unsupervised network alignment. Furthermore, HTC-H and HTC-LT outperform HTC-L by up to 38\% and 16\%, respectively, which indicates that more contributions are made by introducing higher-order topological consistency.

\begin{table}[!t]
\centering
\caption{Numerical results of ablation test}
\small
\begin{tabular}{lcccc}
\toprule
\multirow{2.5}{*}{Methods} & \multicolumn{2}{c}{Douban On/Off-line} & \multicolumn{2}{c}{Allmovie \& Imdb}\\
\cmidrule(lr){2-3} \cmidrule(lr){4-5} 
& {$p@1$} & $\mathrm{MRR}$ & $p@1$ & $\mathrm{MRR}$ \\
\midrule
HTC-L & 0.1978     & 0.3034     & 0.3515    & 0.4522   \\  
HTC-H & 0.4133     & 0.4941     & 0.7324    & 0.7418   \\     
HTC-LT & 0.2947    & 0.3988     & 0.5182    & 0.5892   \\     
HTC-DT & 0.0653     & 0.1508     & 0.1303    & 0.2214   \\ 
HTC(-HT) & \textbf{0.4651} & \textbf{0.5766} & \textbf{0.8436} & \textbf{0.8574} \\
\bottomrule
\end{tabular}
\label{tab: ablation}
\end{table}

\begin{figure*}[!ht]
\begin{subfigure}{0.49\textwidth}
\centering
\includegraphics[width=0.95\linewidth]{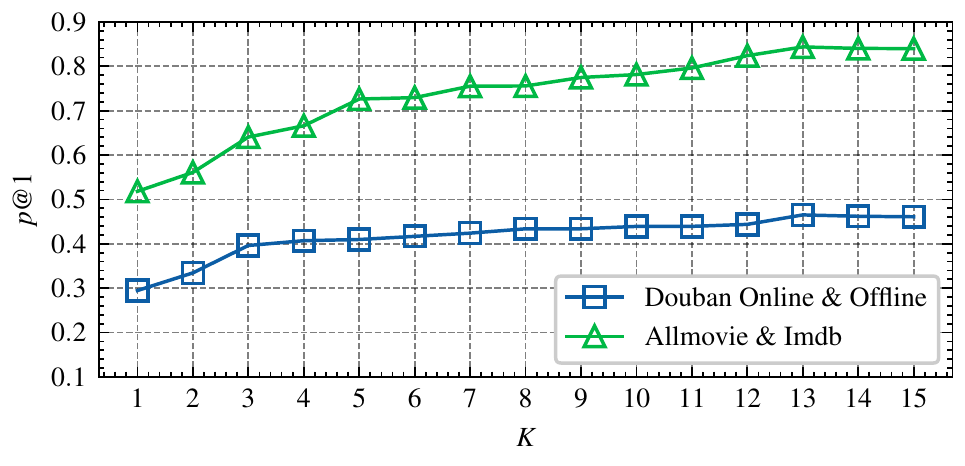} 
\caption{Number of orbits}
\label{fig: k}
\end{subfigure}
\begin{subfigure}{0.49\textwidth}
\centering
\includegraphics[width=0.95\linewidth]{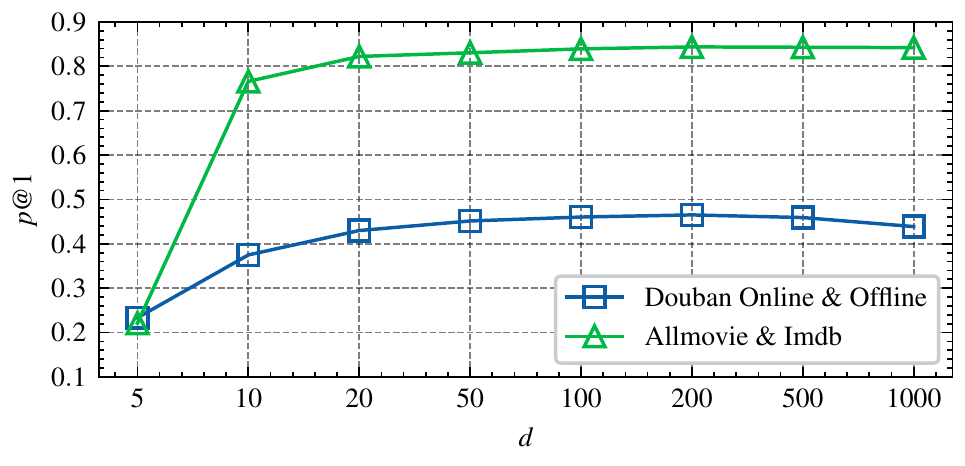}
\caption{Feature dimension of hidden layers}
\label{fig: d}
\end{subfigure}\\
\begin{subfigure}{0.49\textwidth}
\centering
\includegraphics[width=0.95\linewidth]{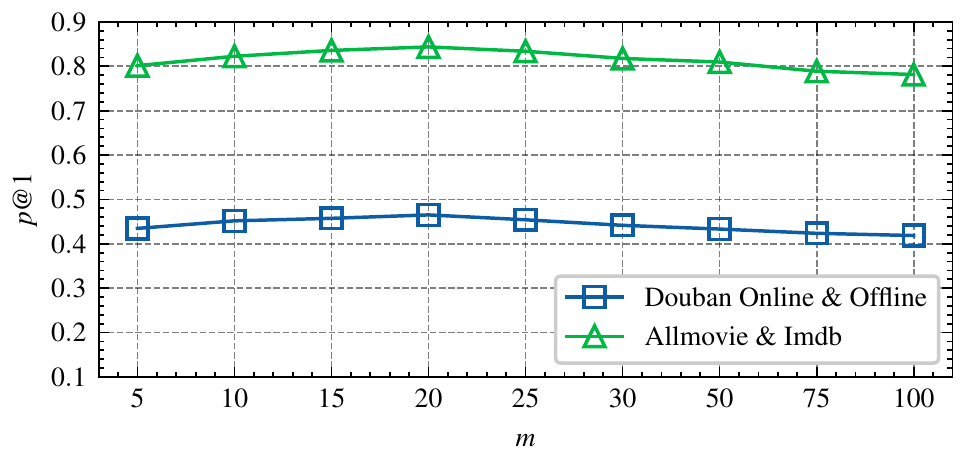}
\caption{Number of nearest neighbors}
\label{fig: m}
\end{subfigure}
\begin{subfigure}{0.49\textwidth}
\centering
\includegraphics[width=0.95\linewidth]{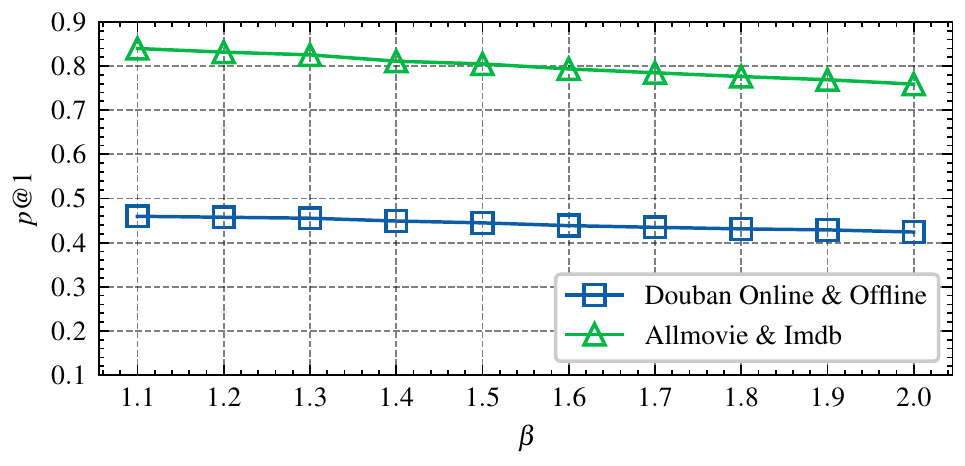}
\caption{Reinforcement rate}
\label{fig: beta}
\end{subfigure}
\caption{{The influence of key hyperparameters on $p@1$.}}
\label{fig: hyperparameter}
\end{figure*}
The diffusion matrix is demonstrated to be helpful for graph learning since multi-hop neighborhoods are considered instead of direct one-hop neighbors \cite{klicpera2019diffusion}. Thus, we take it as a representative technique that considers a larger range of neighboring topologies based on the trivial edge pattern for aggregation information (HTC-DT). However, it performs even worse than HTC-L in the unsupervised alignment task. The fact is, GOMs can generally provide more sparse but more accurate topological consistency information when the order increases, while diffusion matrices are quite the opposite. Densified matrices make it harder to identify good anchor links because structural specificity is reduced and node embeddings become more smooth. Therefore, simply considering a larger range of neighboring topologies may hardly provide favorable information, especially in the unsupervised alignment scenario.


\subsection{Hyperparameter Study} \label{subsec: hyperparameter}
The key hyperparameters of HTC are discussed in detail here.

(1) \textit{The number of orbits ($K$).} Orbits have two properties. First, the number of induced subgraphs on $N$ nodes increases exponentially with $N$ and so does the number of orbits defined on $N$ nodes. Second, the larger a graphlet is, the less frequently it occurs since such a graphlet requires more nodes and edges to form a specific local topology. As a result, the orbit matrix of a higher-order pattern is generally more sparse, which means less consistency information. The trend of incrementally using first $K$ orbits is displayed in Fig.~\ref{fig: k}. It can be found that increasing the number of orbits used at the beginning significantly improves the performance. The growth rate then decays, showing a smaller slope on the curve. The precision is no longer increasing and even slightly decreased when $K$ is larger than 13. Since considering too many orbits may bring little information gain (For instance, in Fig. \ref{fig: importance}, orbits 11, 12 account least for the model on both datasets) and cost more computation, in this work, we consider orbits in induced subgraphs with 2-4 nodes, \textit{i.e.} the first 13 orbits. 

(2) \textit{Structural hyperparameters of GCN ($L$; $d$).} It has been widely demonstrated that the best performance of a GCN model is achieved with 2 or 3 layers \cite{kipf2016semi, trung2020adaptive,lee2019graph}, and so is our model (we achieve best performance with $L=2$). As for the embedding dimension, it can be seen in Fig. \ref{fig: d} that the performance improves significantly at first and then slows down when $d$ grows from 5 to 200. Besides, the $p@1$ metric on Douban dataset decreases when $d>200$, which results from an over-fitting problem. It should be avoided by setting the dimension too large, which increases computational load and may even degrade performance. Hence, we set $d=200$. 

(3) \textit{The number of nearest neighbors ($m$).} The results of the controlled experiment are shown in Fig. \ref{fig: m}. Generally, both too small and too large values of $m$ lead to less satisfactory results, and the best performances on both datasets are achieved within the range [15, 25]. Hence, $m$ is set as 20 in this paper.

(4) \textit{Reinforcement rate ($\beta$).} It can be seen from Fig.~\ref{fig: beta} that the performance of HTC decreases as $\beta$ increases. This reveals that a small increase in the aggregation coefficient in each epoch can better detect more trusted pairs. Hence, we empirically set $\beta = 1.1$. 


\subsection{Visualization Analysis}
\begin{figure*}[!ht]
    \centering   
    \includegraphics[width=0.95\textwidth]{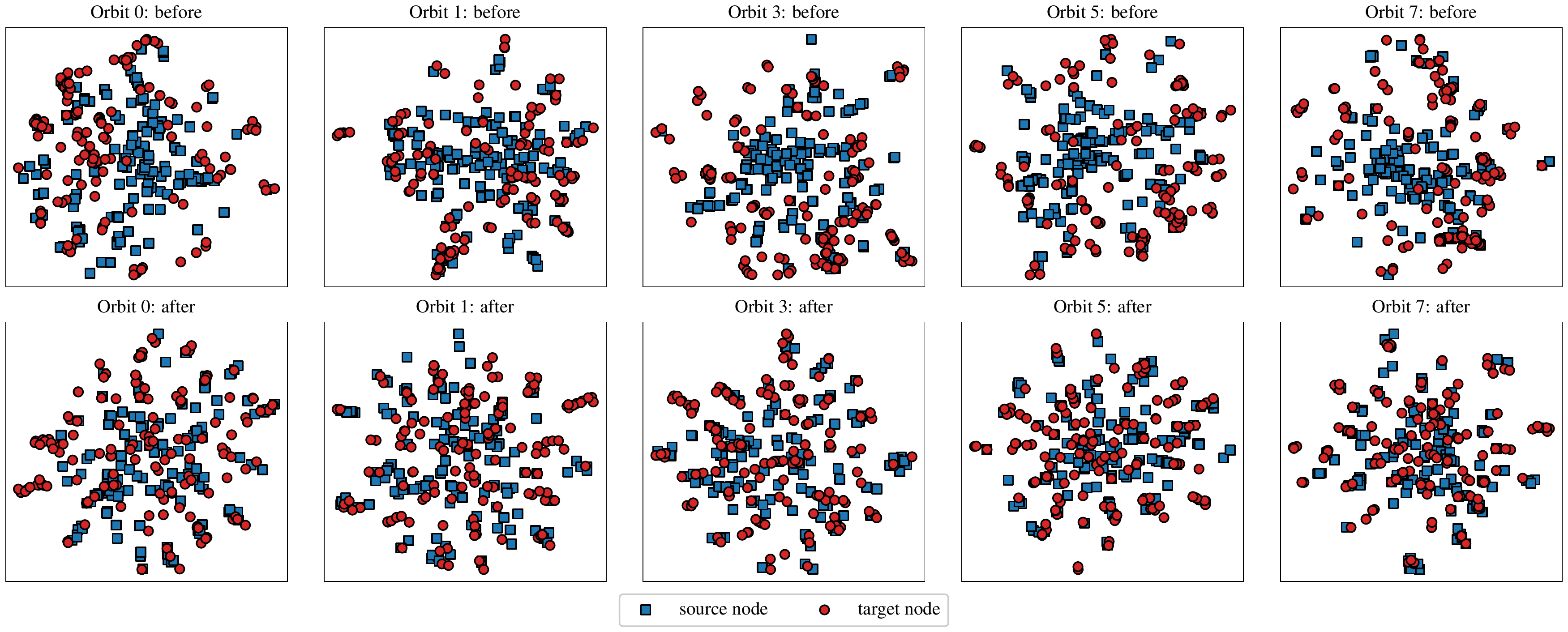}
    \setlength{\belowcaptionskip}{-0.2cm} 
    \caption{{The t-SNE visualization of node embeddings on Douban Online and Offline datasets}}
    \label{fig: tsne}
\end{figure*} 

We use t-SNE technique \cite{van2008visualizing} to visualize the embedding of ground truth anchor nodes before and after being aligned by HTC, as shown in Fig. \ref{fig: tsne}. We randomly sample 150 nodes from Douban Online dataset and their corresponding anchor nodes from Douban Offline dataset, which are mapped from high-dimensional feature space to 2-dimensional space. Due to space limitations, only five orbits are investigated and visualized here. It can be observed that the embeddings of two graphs before alignment have more distinct distributions (the upper five subgraphs). Many source graph embeddings (blue points) cannot find target graph embeddings (red points) that are close enough to them. While after alignment (the lower five subgraphs), the embeddings of source graph and target graph show a more similar distribution, and most node embeddings of source network can be covered by that of target network. This indicates that more consistency information between source graph and target graph is successfully captured by HTC.

\section{Conclusions} \label{sec: conclusion}
In this paper, we propose a fully unsupervised network alignment framework, named HTC, to develop topological consistency from lower order to higher order. We define higher-order topological consistency based on the introduced graphlet orbit matrix. By elegantly integrating higher-order consistency into the information aggregation process of GCN, the task of network alignment is converted into the task of node embedding similarity measurement. Instead of relying on any single level of topology, we train our encoder to be multi-orbit-aware in the unsupervised encoder-decoder paradigm. To further improve the alignment quality, we introduce a trusted-pair based fine-tuning mechanism. Accordingly, all orders of alignment consistency are integrated to obtain a comprehensive evaluation of node correspondence. {Extensive experimental results on benchmark datasets well demonstrate our superiority, such as effectiveness, efficiency, and robustness, over a wide variety of state-of-the-art baselines.} Our work verifies the significance of higher-order topological consistency for unsupervised alignment tasks. It also provides a general roadmap for improving the alignment performance of GCN-based models, as our method is built on the most common GCN-based alignment setting.



\bibliographystyle{ieeetr}
\bibliography{references}

\begin{thebibliography}{10}

\bibitem{dong2012link}
Y.~Dong, J.~Tang, S.~Wu, J.~Tian, N.~V. Chawla, J.~Rao, and H.~Cao, ``Link
  prediction and recommendation across heterogeneous social networks,'' in {\em
  2012 IEEE 12th International conference on data mining}, pp.~181--190, IEEE,
  2012.

\bibitem{xiang2018online}
Y.~Xiang, G.~Zhang, S.~Gu, and J.~Cai, ``Online multi-layer dictionary pair
  learning for visual classification,'' {\em Expert Systems with Applications},
  vol.~105, pp.~174--182, 2018.

\bibitem{li2014matching}
C.-Y. Li and S.-D. Lin, ``Matching users and items across domains to improve
  the recommendation quality,'' in {\em Proceedings of the 20th ACM SIGKDD
  international conference on Knowledge discovery and data mining},
  pp.~801--810, 2014.

\bibitem{liu2016aligning}
L.~Liu, W.~K. Cheung, X.~Li, and L.~Liao, ``Aligning users across social
  networks using network embedding.,'' in {\em Ijcai}, pp.~1774--1780, 2016.

\bibitem{yin2016adapting}
H.~Yin, X.~Zhou, B.~Cui, H.~Wang, K.~Zheng, and Q.~V.~H. Nguyen, ``Adapting to
  user interest drift for poi recommendation,'' {\em IEEE Transactions on
  Knowledge and Data Engineering}, vol.~28, no.~10, pp.~2566--2581, 2016.

\bibitem{nassar2018low}
H.~Nassar, N.~Veldt, S.~Mohammadi, A.~Grama, and D.~F. Gleich, ``Low rank
  spectral network alignment,'' in {\em Proceedings of the 2018 World Wide Web
  Conference}, pp.~619--628, 2018.

\bibitem{zhan2015influence}
Q.~Zhan, J.~Zhang, S.~Wang, S.~Y. Philip, and J.~Xie, ``Influence maximization
  across partially aligned heterogenous social networks,'' in {\em Pacific-Asia
  Conference on Knowledge Discovery and Data Mining}, pp.~58--69, Springer,
  2015.

\bibitem{Zhang2019}
J.~Zhang and P.~S. Yu, {\em Unsupervised Network Alignment}, pp.~165--202.
\newblock Cham: Springer International Publishing, 2019.

\bibitem{mu2016user}
X.~Mu, F.~Zhu, E.-P. Lim, J.~Xiao, J.~Wang, and Z.-H. Zhou, ``User identity
  linkage by latent user space modelling,'' in {\em Proceedings of the 22nd ACM
  SIGKDD International Conference on Knowledge Discovery and Data Mining},
  pp.~1775--1784, 2016.

\bibitem{trung2020adaptive}
H.~T. Trung, T.~Van~Vinh, N.~T. Tam, H.~Yin, M.~Weidlich, and N.~Q.~V. Hung,
  ``Adaptive network alignment with unsupervised and multi-order convolutional
  networks,'' in {\em 2020 IEEE 36th International Conference on Data
  Engineering (ICDE)}, pp.~85--96, IEEE, 2020.

\bibitem{zhu2020huna}
D.~Zhu, Y.~Sun, H.~Du, N.~Cao, T.~Baker, and G.~Srivastava, ``Huna: A method of
  hierarchical unsupervised network alignment for iot,'' {\em IEEE Internet of
  Things Journal}, vol.~8, no.~5, pp.~3201--3210, 2020.

\bibitem{zhou2020unsupervised}
Y.~Zhou, J.~Ren, R.~Jin, Z.~Zhang, D.~Dou, and D.~Yan, ``Unsupervised multiple
  network alignment with multinominal gan and variational inference,'' in {\em
  2020 IEEE International Conference on Big Data (Big Data)}, pp.~868--877,
  IEEE, 2020.

\bibitem{trung2020comparative}
H.~T. Trung, N.~T. Toan, T.~Van~Vinh, H.~T. Dat, D.~C. Thang, N.~Q.~V. Hung,
  and A.~Sattar, ``A comparative study on network alignment techniques,'' {\em
  Expert Systems with Applications}, vol.~140, p.~112883, 2020.

\bibitem{singh2008global}
R.~Singh, J.~Xu, and B.~Berger, ``Global alignment of multiple protein
  interaction networks with application to functional orthology detection,''
  {\em Proceedings of the National Academy of Sciences}, vol.~105, no.~35,
  pp.~12763--12768, 2008.

\bibitem{labitzke2011your}
S.~Labitzke, I.~Taranu, and H.~Hartenstein, ``What your friends tell others
  about you: Low cost linkability of social network profiles,'' in {\em Proc.
  5th International ACM Workshop on Social Network Mining and Analysis, San
  Diego, CA, USA}, pp.~1065--1070, 2011.

\bibitem{liu2013s}
J.~Liu, F.~Zhang, X.~Song, Y.-I. Song, C.-Y. Lin, and H.-W. Hon, ``What's in a
  name? an unsupervised approach to link users across communities,'' in {\em
  Proceedings of the sixth ACM international conference on Web search and data
  mining}, pp.~495--504, 2013.

\bibitem{zhang2016final}
S.~Zhang and H.~Tong, ``Final: Fast attributed network alignment,'' in {\em
  Proceedings of the 22nd ACM SIGKDD International Conference on Knowledge
  Discovery and Data Mining}, pp.~1345--1354, 2016.

\bibitem{heimann2018regal}
M.~Heimann, H.~Shen, T.~Safavi, and D.~Koutra, ``Regal: Representation
  learning-based graph alignment,'' in {\em Proceedings of the 27th ACM
  international conference on information and knowledge management},
  pp.~117--126, 2018.

\bibitem{man2016predict}
T.~Man, H.~Shen, S.~Liu, X.~Jin, and X.~Cheng, ``Predict anchor links across
  social networks via an embedding approach.,'' in {\em Ijcai}, vol.~16,
  pp.~1823--1829, 2016.

\bibitem{zhou2018deeplink}
F.~Zhou, L.~Liu, K.~Zhang, G.~Trajcevski, J.~Wu, and T.~Zhong, ``Deeplink: A
  deep learning approach for user identity linkage,'' in {\em IEEE INFOCOM
  2018-IEEE Conference on Computer Communications}, pp.~1313--1321, IEEE, 2018.

\bibitem{milo2002network}
R.~Milo, S.~Shen-Orr, S.~Itzkovitz, N.~Kashtan, D.~Chklovskii, and U.~Alon,
  ``Network motifs: simple building blocks of complex networks,'' {\em
  Science}, vol.~298, no.~5594, pp.~824--827, 2002.

\bibitem{prvzulj2004modeling}
N.~Pr{\v{z}}ulj, D.~G. Corneil, and I.~Jurisica, ``Modeling interactome:
  scale-free or geometric?,'' {\em Bioinformatics}, vol.~20, no.~18,
  pp.~3508--3515, 2004.

\bibitem{kipf2016variational}
T.~N. Kipf and M.~Welling, ``Variational graph auto-encoders,'' {\em arXiv
  preprint arXiv:1611.07308}, 2016.

\bibitem{du2019joint}
X.~Du, J.~Yan, and H.~Zha, ``Joint link prediction and network alignment via
  cross-graph embedding.,'' in {\em IJCAI}, pp.~2251--2257, 2019.

\bibitem{liang2021unsupervised}
Z.~Liang, Y.~Rong, C.~Li, Y.~Zhang, Y.~Huang, T.~Xu, X.~Ding, and J.~Huang,
  ``Unsupervised large-scale social network alignment via cross network
  embedding,'' in {\em Proceedings of the 30th ACM International Conference on
  Information \& Knowledge Management}, pp.~1008--1017, 2021.

\bibitem{mohammadi2016triangular}
S.~Mohammadi, D.~F. Gleich, T.~G. Kolda, and A.~Grama, ``Triangular alignment
  (tame): A tensor-based approach for higher-order network alignment,'' {\em
  IEEE/ACM transactions on computational biology and bioinformatics}, vol.~14,
  no.~6, pp.~1446--1458, 2016.

\bibitem{klicpera2019diffusion}
J.~Klicpera, S.~Wei{\ss}enberger, and S.~G{\"u}nnemann, ``Diffusion improves
  graph learning,'' {\em Advances in Neural Information Processing Systems},
  vol.~32, pp.~13354--13366, 2019.

\bibitem{monti2018motifnet}
F.~Monti, K.~Otness, and M.~M. Bronstein, ``Motifnet: a motif-based graph
  convolutional network for directed graphs,'' in {\em 2018 IEEE Data Science
  Workshop (DSW)}, pp.~225--228, IEEE, 2018.

\bibitem{lee2019graph}
J.~B. Lee, R.~A. Rossi, X.~Kong, S.~Kim, E.~Koh, and A.~Rao, ``Graph
  convolutional networks with motif-based attention,'' in {\em Proceedings of
  the 28th ACM International Conference on Information and Knowledge
  Management}, pp.~499--508, 2019.

\bibitem{sankar2019meta}
A.~Sankar, X.~Zhang, and K.~C.-C. Chang, ``Meta-gnn: Metagraph neural network
  for semi-supervised learning in attributed heterogeneous information
  networks,'' in {\em Proceedings of the 2019 IEEE/ACM International Conference
  on Advances in Social Networks Analysis and Mining}, pp.~137--144, 2019.

\bibitem{prvzulj2007biological}
N.~Pr{\v{z}}ulj, ``Biological network comparison using graphlet degree
  distribution,'' {\em Bioinformatics}, vol.~23, no.~2, pp.~e177--e183, 2007.

\bibitem{solava2012graphlet}
R.~W. Solava, R.~P. Michaels, and T.~Milenkovi{\'c}, ``Graphlet-based edge
  clustering reveals pathogen-interacting proteins,'' {\em Bioinformatics},
  vol.~28, no.~18, pp.~i480--i486, 2012.

\bibitem{milenkovic2010optimal}
T.~Milenkovi{\'c}, W.~L. Ng, W.~Hayes, and N.~Pr{\v{z}}ulj, ``Optimal network
  alignment with graphlet degree vectors,'' {\em Cancer informatics}, vol.~9,
  pp.~CIN--S4744, 2010.

\bibitem{crawford2015great}
J.~Crawford and T.~Milenkovi{\'c}, ``Great: graphlet edge-based network
  alignment,'' in {\em 2015 IEEE International Conference on Bioinformatics and
  Biomedicine (BIBM)}, pp.~220--227, IEEE, 2015.

\bibitem{almulhim2019network}
A.~Almulhim, V.~S. Dave, and M.~A. Hasan, ``Network alignment using graphlet
  signature and high order proximity,'' in {\em International Conference on
  Machine Learning, Optimization, and Data Science}, pp.~130--142, Springer,
  2019.

\bibitem{grover2016node2vec}
A.~Grover and J.~Leskovec, ``node2vec: Scalable feature learning for
  networks,'' in {\em Proceedings of the 22nd ACM SIGKDD international
  conference on Knowledge discovery and data mining}, pp.~855--864, 2016.

\bibitem{perozzi2014deepwalk}
B.~Perozzi, R.~Al-Rfou, and S.~Skiena, ``Deepwalk: Online learning of social
  representations,'' in {\em Proceedings of the 20th ACM SIGKDD international
  conference on Knowledge discovery and data mining}, pp.~701--710, 2014.

\bibitem{tang2015line}
J.~Tang, M.~Qu, M.~Wang, M.~Zhang, J.~Yan, and Q.~Mei, ``Line: Large-scale
  information network embedding,'' in {\em Proceedings of the 24th
  international conference on world wide web}, pp.~1067--1077, 2015.

\bibitem{kipf2016semi}
T.~N. Kipf and M.~Welling, ``Semi-supervised classification with graph
  convolutional networks,'' {\em arXiv preprint arXiv:1609.02907}, 2016.

\bibitem{nguyen2018graph}
T.~Nguyen and R.~Grishman, ``Graph convolutional networks with argument-aware
  pooling for event detection,'' in {\em Proceedings of the AAAI Conference on
  Artificial Intelligence}, vol.~32, 2018.

\bibitem{velivckovic2017graph}
P.~Veli{\v{c}}kovi{\'c}, G.~Cucurull, A.~Casanova, A.~Romero, P.~Lio, and
  Y.~Bengio, ``Graph attention networks,'' {\em arXiv preprint
  arXiv:1710.10903}, 2017.

\bibitem{wang2021deep}
K.~Wang, J.~Chen, Z.~Song, Y.~Wang, and C.~Yang, ``Deep neural network-embedded
  stochastic nonlinear state-space models and their applications to process
  monitoring,'' {\em IEEE Transactions on Neural Networks and Learning
  Systems}, 2021.

\bibitem{chen2020cone}
X.~Chen, M.~Heimann, F.~Vahedian, and D.~Koutra, ``Cone-align: Consistent
  network alignment with proximity-preserving node embedding,'' in {\em
  Proceedings of the 29th ACM International Conference on Information \&
  Knowledge Management}, pp.~1985--1988, 2020.

\bibitem{kingma2014adam}
D.~P. Kingma and J.~Ba, ``Adam: A method for stochastic optimization,'' {\em
  arXiv preprint arXiv:1412.6980}, 2014.

\bibitem{dinu2014improving}
G.~Dinu, A.~Lazaridou, and M.~Baroni, ``Improving zero-shot learning by
  mitigating the hubness problem,'' {\em arXiv preprint arXiv:1412.6568}, 2014.

\bibitem{hovcevar2014combinatorial}
T.~Ho{\v{c}}evar and J.~Dem{\v{s}}ar, ``A combinatorial approach to graphlet
  counting,'' {\em Bioinformatics}, vol.~30, no.~4, pp.~559--565, 2014.

\bibitem{hovcevar2016computation}
T.~Ho{\v{c}}evar and J.~Dem{\v{s}}ar, ``Computation of graphlet orbits for
  nodes and edges in sparse graphs,'' {\em Journal of Statistical Software},
  vol.~71, no.~1, pp.~1--24, 2016.

\bibitem{rossi2015network}
R.~Rossi and N.~Ahmed, ``The network data repository with interactive graph
  analytics and visualization,'' in {\em Twenty-Ninth AAAI Conference on
  Artificial Intelligence}, 2015.

\bibitem{amunts2013bigbrain}
K.~Amunts, C.~Lepage, L.~Borgeat, H.~Mohlberg, T.~Dickscheid, M.-{\'E}.
  Rousseau, S.~Bludau, P.-L. Bazin, L.~B. Lewis, A.-M. Oros-Peusquens, {\em
  et~al.}, ``Bigbrain: an ultrahigh-resolution 3d human brain model,'' {\em
  Science}, vol.~340, no.~6139, pp.~1472--1475, 2013.

\bibitem{iofciu2011identifying}
T.~Iofciu, P.~Fankhauser, F.~Abel, and K.~Bischoff, ``Identifying users across
  social tagging systems,'' in {\em Fifth International AAAI Conference on
  Weblogs and Social Media}, 2011.

\bibitem{van2008visualizing}
L.~Van~der Maaten and G.~Hinton, ``Visualizing data using t-sne.,'' {\em
  Journal of machine learning research}, vol.~9, no.~11, 2008.

\end{thebibliography}

\end{document}